\documentclass[10pt,journal,compsoc]{IEEEtran}

\usepackage{amsmath,amsfonts}
\usepackage{amssymb}
\usepackage{amsthm}
\usepackage{algorithm}
\usepackage{algorithmic}
\newcommand{\Statex}{\item[]}
\newcommand{\AlgIO}[2]{%
  \Statex \hspace*{-0.85\labelwidth}\hspace*{-\labelsep}\textbf{#1:} #2
}

\usepackage{array}
\usepackage[caption=false,font=normalsize,labelfont=sf,textfont=sf]{subfig}
\usepackage{textcomp}
\usepackage{stfloats}
\usepackage{url}
\usepackage{verbatim}
\usepackage{graphicx}
\usepackage{booktabs}
\ifCLASSOPTIONcompsoc
  \usepackage[nocompress]{cite}
\else
  \usepackage{cite}
\fi

\usepackage{multirow}

\newtheorem{theorem}{Theorem}
\newtheorem{lemma}[theorem]{Lemma}

\begin{document}

\title{CARE-LoRA: Compressed Activation REconstruction for Memory-Efficient LoRA}

\author{Gengyu~Zhang, Haiyin~Ran, Zhengbao~He, Yuhang~Liu, Hanling~Tian, Zhehao~Huang,
        and~Xiaolin~Huang,~\IEEEmembership{Senior~Member,~IEEE}%
\thanks{The authors are with the Institute of Image Processing and Pattern Recognition, Shanghai Jiao Tong University, Shanghai 200240, China. E-mail: \{zhang2021jiaoda, rhy2006, lstefanie, yuhangliu, hanlingtian, kinght\_h, \mbox{xiaolinhuang}\}@sjtu.edu.cn.}%
\thanks{(Corresponding author: Xiaolin Huang.)}}

\IEEEtitleabstractindextext{%
\begin{abstract}
As the scale of large pre-trained models continues to grow, fine-tuning them under limited memory budgets has become increasingly challenging. Low-Rank Adaptation (LoRA), currently one of the most widely adopted parameter-efficient fine-tuning (PEFT) methods, mitigates this challenge by optimizing only low-rank adaptation matrices, thereby greatly reducing the number of trainable parameters. 
With the parameter overhead substantially reduced, the activations retained for backpropagation have emerged as the primary remaining memory bottleneck during LoRA fine-tuning.
To address this, we propose CARE-LoRA, a data-aware Compressed Activation REconstruction framework. By exploiting the inherent projection structure of LoRA, CARE-LoRA replaces the full input activation with the low-rank compressed activation naturally produced by the LoRA branch. It further computes a lightweight reconstruction matrix during the forward pass with negligible additional computation cost, which is used during backpropagation to reconstruct the gradient signal, thereby keeping LoRA matrices fully trainable.
Extensive experiments across diverse models and downstream tasks demonstrate that, while substantially reducing the overall memory footprint, CARE-LoRA achieves competitive or even superior performance compared with standard LoRA and representative LoRA variants. Our code is publicly available at \url{https://github.com/fishandyu/CARE-LoRA}.
\end{abstract}

\begin{IEEEkeywords}
Low-rank adaptation, parameter-efficient fine-tuning, activation compression, large language models.
\end{IEEEkeywords}}

\maketitle

\IEEEdisplaynontitleabstractindextext

\IEEEpeerreviewmaketitle

\IEEEraisesectionheading{\section{Introduction}\label{sec:introduction}}

\IEEEPARstart{P}{retraining} followed by downstream fine-tuning has become a standard recipe for building modern language systems. However, full-parameter fine-tuning remains difficult under limited GPU memory, because it not only updates all pretrained weights, but also requires storing their gradients, optimizer states, and intermediate activations for backpropagation. Parameter-efficient fine-tuning (PEFT) mitigates the parameter bottleneck by restricting optimization to a small number of trainable parameters. In PEFT, Low-Rank Adaptation (LoRA) has become a deployment-friendly and widely used approach: it freezes the pretrained backbone, learns only low-rank weight updates, and can merge the learned adaptation into the base model without extra inference latency \cite{hu2022lora}.

In practice, although LoRA substantially reduces trainable parameters, their gradients, and optimizer states, it does not reduce the activation memory required by backpropagation, making activations a more prominent bottleneck. This has motivated recent work on reducing the activations saved during LoRA-style fine-tuning. HyC-LoRA targets activations buffered for nonlinear operators in LoRA training and compresses them via outlier-aware low-bit quantization~\cite{wang2025hyclora}. LoRAct instead reduces activation memory by decomposing full activation matrices into compact low-rank factors~\cite{shi2025loract}. Gradient checkpointing, a standard memory-saving technique, reduces activation memory by storing only a subset of intermediate activations and recomputing the rest during backpropagation~\cite{chen2016training}. While effective, these methods mainly reduce activation memory through generic quantization, decomposition, or recomputation, rather than exploiting the structural properties of LoRA factors.

A representative approach that exploits this factorized structure is LoRA-FA~\cite{zhang2023lorafa}, which freezes $A$ and only updates $B$, thereby replacing stored full activations with compressed low-rank activations. Although LoRA-FA largely reduces the activation memory, its performance is far from satisfactory. The reason is that when $A$ is fixed, the weight change $\Delta W = AB$ is restricted to the subspace determined by a random, non-trained initialization matrix. In other words, the adapter obtained by LoRA-FA can only learn how to map features from a fixed low-dimensional subspace, but the subspace itself cannot be adjusted as in standard LoRA.

The challenge to overcome this limitation is that on the one hand we need to unfreeze $A$ to pursue adaptability and on the other hand we have to avoid calculating the exact gradient of $A$ to keep the memory efficiency. To achieve this goal, we propose a novel method to approximately reconstruct the backward information. We name this method \textbf{CARE-LoRA}, standing for the \textbf{C}ompressed \textbf{A}ctivation \textbf{RE}construction framework for memory-efficient low-rank adaptation. It stores the natural compressed activation $Z=XA\in\mathbb{R}^{N\times r}$ together with a lightweight \emph{data-aware} reconstruction matrix $M\in\mathbb{R}^{r\times m}$, where $r \ll \min\{m,N\}$. As a result, its memory overhead is marginally higher than LoRA-FA, while recovering much of the performance of standard LoRA by keeping $A$ trainable.

We evaluate CARE-LoRA across diverse fine-tuning settings, including natural language understanding, mathematical reasoning, code generation, instruction following, and image generation. Across these settings, our method reduces total peak memory by up to roughly $20\%$ relative to standard LoRA. At the same rank, it substantially improves over LoRA-FA, raising the average score by 4.3 points on GLUE~\cite{wang2018glue} and 7.4 points on SuperGLUE~\cite{wang2019superglue} at less than $1\%$ additional memory. Moreover, reducing memory consumption allows CARE-LoRA to use a higher rank for stronger performance. By increasing the rank, CARE-LoRA further surpasses LoRA and representative LoRA variants such as DoRA~\cite{Liu2024DoRAWL} and PiSSA~\cite{meng2024pissa}. Notably, CARE-LoRA achieves the highest average performance on the Mistral-7B-v0.3~\cite{jiang2023mistral} fine-tuning tasks while requiring only $1.02\times$ the per-step training time of LoRA and still maintaining a lower memory footprint than LoRA.

\begin{figure*}[!t]
    \centering
    \includegraphics[width=0.87\textwidth, trim={0 80pt 0 65pt}, clip]{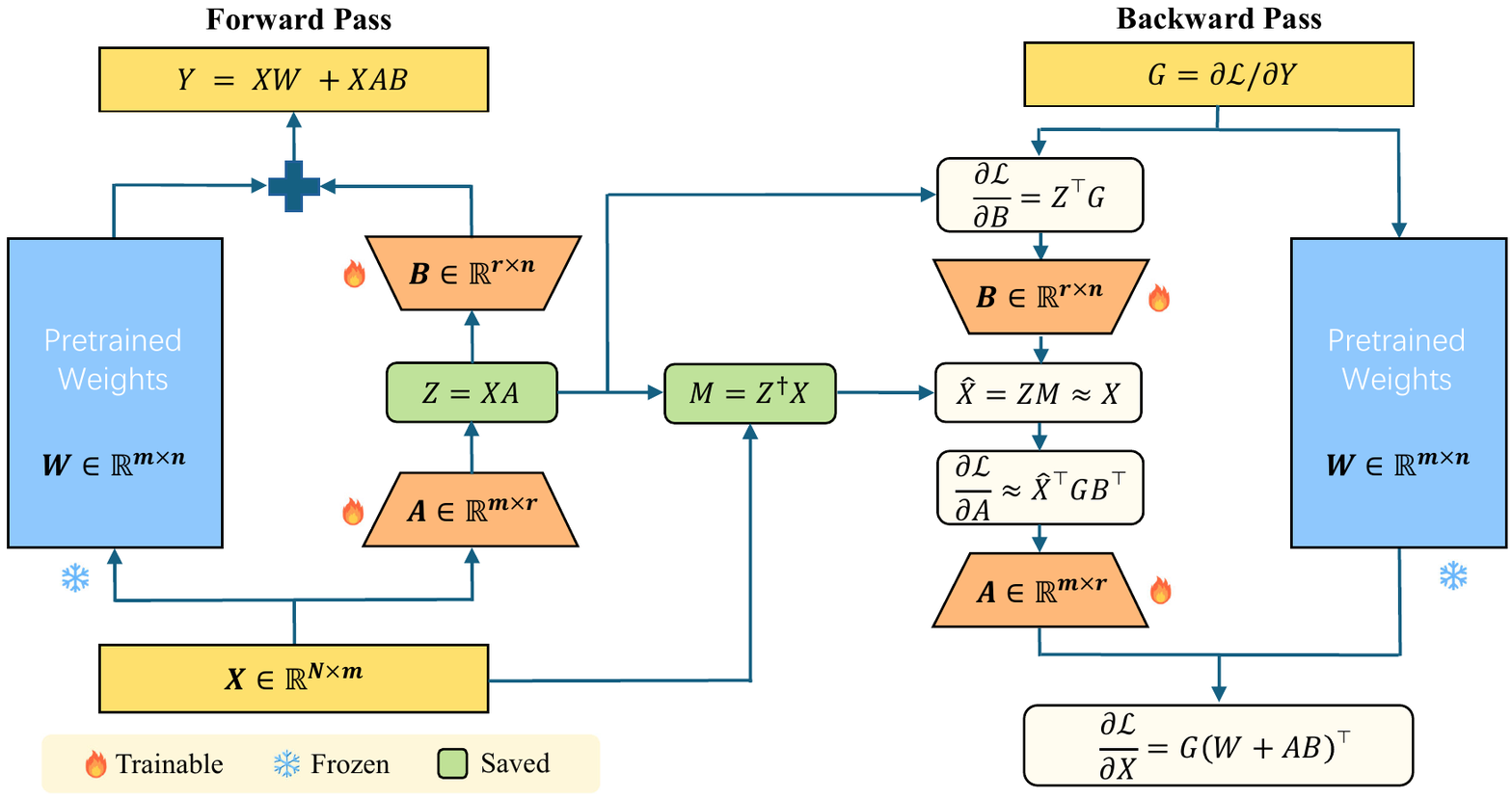}
    \caption{Overview of CARE-LoRA. CARE-LoRA stores the compressed activation $Z$ and a lightweight reconstruction matrix $M$ instead of the full activation $X$, enabling the gradient of $A$ to be reconstructed while keeping both LoRA factors trainable.}
    \label{fig:CARELORA}
\end{figure*}

The main contributions of this work are as follows:
\begin{itemize}
    \item We revisit the memory bottleneck of LoRA fine-tuning and note that, once adapter-side parameters are sufficiently reduced, activation storage becomes the dominant remaining obstacle.
    \item We propose CARE-LoRA, a data-aware framework that stores compressed activations and a lightweight reconstruction matrix, keeping both LoRA factors trainable under a near LoRA-FA memory budget. We further prove that its projection-down subspace can evolve during fine-tuning.
    \item We show empirically that CARE-LoRA substantially outperforms LoRA-FA at the same rank. By reinvesting the saved memory into a higher rank, it further surpasses standard LoRA and representative variants, achieving a favorable balance of memory efficiency, speed, and accuracy across multiple tasks.
\end{itemize}

\section{Related Work}

\subsection{Low-Rank Adaptation}

Low-Rank Adaptation (LoRA) is a widely adopted parameter-efficient fine-tuning method that freezes the pretrained weight $W$ and learns a low-rank update $\Delta W = AB$, where the projection-down matrix $A$ compresses the input activation $X$ and the projection-up matrix $B$ expands it back, adding no inference cost once merged \cite{hu2022lora}. Many variants improve LoRA from complementary perspectives. On the parameter side, AdaLoRA adaptively allocates the rank budget across weight matrices \cite{Zhang2023AdaLoRAAB}, while VeRA and LoRA-XS reduce trainable parameters further through shared frozen random projections \cite{Kopiczko2023VeRAVR} or a small trainable core between frozen SVD-derived factors \cite{Balazy2025LoRAXSLA}. On the optimization side, LoRA-GA and PiSSA enhance adapter initialization to speed up convergence \cite{wang2024loraga,meng2024pissa}, and DoRA decomposes each weight into magnitude and direction \cite{Liu2024DoRAWL}. QLoRA instead targets the frozen backbone, back-propagating through a 4-bit quantized model to cut base-weight memory \cite{dettmers2023qlora}. Despite these advances, all of them still retain the full input activation for backpropagation, leaving the activation memory of fine-tuning unaddressed.

\subsection{Memory-Efficient Fine-Tuning}

During fine-tuning, memory is consumed by the trainable parameters, their gradients, optimizer states, and the activations retained for backpropagation. Once parameter-efficient methods reduce the first three terms, activation becomes the dominant term and scales with batch size and sequence length \cite{shi2025loract}. General-purpose techniques target different parts of this budget, recomputing activations on the backward pass \cite{chen2016training,korthikanti2023reducing}, projecting gradients and optimizer states into a low-rank subspace \cite{zhao2024galore}, or compressing stored activations through quantization or low-rank projection \cite{Chen2021ActNNRT,shamshoum2025compact}, yet none of them is designed for LoRA fine-tuning. Closer to our setting, LoRAct compresses the cached activation with an online low-rank decomposition \cite{shi2025loract}. HyC-LoRA quantizes the activations buffered for nonlinear operators with an outlier-aware scheme \cite{wang2025hyclora}. Although both reduce the activation stored during LoRA fine-tuning, they do not make good use of the factorized structure of LoRA. LoRA-FA instead exploits this, freezing the projection-down matrix and training only the projection-up matrix, so it does not need to store the full activation \cite{zhang2023lorafa}. However, this restricts the projection-down subspace throughout fine-tuning and thus degrades performance. To address this limitation, we propose a method that uses the compressed activation naturally produced by LoRA to reconstruct the gradient for training. Our method not only preserves the memory savings of LoRA-FA but also retains the performance of standard LoRA, with only negligible computational overhead.

\section{Method}
\label{sec:method}

\begin{algorithm*}[t]
\caption{CARE-LoRA}
\label{alg:care_lora}
\fontsize{9pt}{11pt}\selectfont

\vspace{0.1em}
\begin{minipage}[t]{0.47\textwidth}
\raggedright
\textbf{\textit{Forward Pass}}\\[-0.7em]
\rule{\linewidth}{0.4pt}
\end{minipage}
\hfill
\begin{minipage}[t]{0.47\textwidth}
\raggedright
\textbf{\textit{Backward Pass}}\\[-0.7em]
\rule{\linewidth}{0.4pt}
\end{minipage}

\vspace{0.3em}

\begin{minipage}[t]{0.47\textwidth}
\raggedright
\begin{algorithmic}[1]
\AlgIO{Input} \text{activation} $X\in\mathbb{R}^{N\times m}$
\AlgIO{Storage} \text{frozen} weight $W\in\mathbb{R}^{m\times n}$, trainable matrices $A\in\mathbb{R}^{m\times r}$ and $B\in\mathbb{R}^{r\times n}$
\AlgIO{Output} $Y \in\mathbb{R}^{N\times n}$, matrices $Z,M$ to be saved

\STATE $Z \gets XA$ 
\Statex \hspace{\algorithmicindent}\textit{Prepare for backward reconstruction.}
\STATE $M \gets Z^{\dagger}X$
\Statex \hspace{\algorithmicindent}$Z^{\dagger}$\textit{ implemented as } $(Z^\top Z+\lambda I)^{-1}Z^\top$
\STATE $\Delta Y \gets ZB$
\STATE $Y \gets XW+\Delta Y$
\STATE Save $Z$ and $M$, discard $X$
\STATE Continue forward propagation with $Y$
\end{algorithmic}
\end{minipage}
\hfill
\vrule
\hfill
\begin{minipage}[t]{0.47\textwidth}
\raggedright
\begin{algorithmic}[1]
\AlgIO{Input} \text{upstream} gradient $G=\partial\mathcal{L}/\partial Y$
\AlgIO{Storage} \text{compressed} activation $Z \in\mathbb{R}^{N\times r}$, reconstruction matrix $M\in\mathbb{R}^{r\times m}$
\AlgIO{Output} \text{gradients} $\widetilde{\nabla}_A\mathcal{L}, \nabla_B\mathcal{L}, \nabla_X\mathcal{L}$

\STATE $\nabla_X\mathcal{L} \gets G(W+AB)^\top$
\STATE $\nabla_B\mathcal{L} \gets Z^\top G$
\Statex \hspace{\algorithmicindent}\textit{Exact gradient}
\STATE $\widehat{X} \gets ZM$
\Statex \hspace{\algorithmicindent}\textit{Activation reconstruction}
\STATE $\widetilde{\nabla}_A\mathcal{L} \gets \widehat{X}^{\top}GB^\top$
\STATE Update $A$ and $B$ with optimizer

\STATE Continue backpropagation with $\nabla_X\mathcal{L}$

\end{algorithmic}
\end{minipage}

\end{algorithm*}

\subsection{Preliminaries}
\label{subsec:lora_gradients}

Consider a linear layer with a frozen pretrained weight $W\in\mathbb{R}^{m\times n}$. Let $X\in\mathbb{R}^{N\times m}$ be the input activation, where $N$ denotes the number of tokens in the current batch, e.g., batch size times sequence length. LoRA introduces two trainable low-rank matrices $A\in\mathbb{R}^{m\times r}$ and $B\in\mathbb{R}^{r\times n}$, with $r\ll \min(m,n,N)$, and the adapted linear transformation is
\begin{equation}
    Y = X(W + AB).
    \label{eq:lora_forward}
\end{equation}
Following common practice, we omit the LoRA scaling factor
$s=\alpha/r$ for simplicity, since it can be absorbed into the low-rank
factors $A$ and $B$ without affecting the following derivation.

Denote the upstream gradient by
\begin{equation}
    G = \frac{\partial \mathcal{L}}{\partial Y}
    \in \mathbb{R}^{N\times n}.
\end{equation}
The exact LoRA gradients are
\begin{align}
    \nabla_B \mathcal{L}
    &= A^\top X^\top G
     = (XA)^\top G,
    \label{eq:grad_B_exact}
    \\
    \nabla_A \mathcal{L}
    &= X^\top G B^\top,
    \label{eq:grad_A_exact}
    \\
    \nabla_X \mathcal{L}
    &= G(W+AB)^\top.
    \label{eq:grad_X_exact}
\end{align}
Eq.~\eqref{eq:grad_B_exact} shows that the gradient of $B$ only requires an intermediate matrix
\begin{equation}
    Z = XA \in \mathbb{R}^{N\times r}.
    \label{eq:z_def}
\end{equation}
Since $Z$ is produced by the projection-down matrix $A$ and reduces the activation dimension from $m$ to $r$, it naturally serves as a compressed representation of the full activation $X$ under the LoRA parameterization.

In contrast, Eq.~\eqref{eq:grad_A_exact} shows that the gradient of $A$ still depends on the full activation $X\in\mathbb{R}^{N\times m}$. Therefore, standard LoRA must retain $X$ during the forward pass if both $A$ and $B$ are trainable. LoRA-FA avoids this activation dependency by freezing $A$ and updating only $B$. In this case, the backward pass only needs to store $Z=XA$ instead of the full activation $X$. However, the projection-down matrix $A$ then remains fixed throughout fine-tuning. CARE-LoRA keeps the same compressed activation $Z$, but reconstructs the missing activation signal needed to update $A$.

\subsection{Compressed Activation REconstruction}
\label{subsec:care_reconstruction}

CARE-LoRA aims to update both LoRA matrices while retaining the compressed activation $Z$ instead of the full activation $X$. The key challenge is computing the gradient of $A$. Since $\nabla_A\mathcal{L}=X^\top G B^\top$, directly reconstructing this gradient from $Z$ would require simultaneous access to the forward activation $X$ and the backward gradient $G$, which is incompatible with our activation-saving setting. We therefore reconstruct $X$ instead and use the reconstructed activation for subsequent gradient computation of $A$.

Since $Z=XA$ is already computed in the LoRA branch, we seek a linear reconstruction matrix
\begin{equation*}
    M \in \mathbb{R}^{r\times m}
\end{equation*}
such that the reconstructed activation
\begin{equation}
    \widehat{X} = ZM
    \label{eq:xhat_def}
\end{equation}
approximates the original activation $X$. For a fixed batch activation $X$ and current LoRA projection-down matrix $A$, we define $M^\star$ by the least-squares problem
\begin{equation}
\begin{aligned}
    M^\star
    &=
    \arg\min_{M}
    \|X - XAM\|_F^2 \\
    &=
    \arg\min_{M}
    \|X - ZM\|_F^2.
\end{aligned}
    \label{eq:least_squares_reconstruction}
\end{equation}
When $Z$ has full column rank, the closed-form solution is
\begin{equation}
    M^\star = (Z^\top Z)^{-1}Z^\top X = Z^\dagger X .
    \label{eq:m_star}
\end{equation}
Thus, under the Frobenius reconstruction loss, CARE-LoRA uses the current batch itself to compute the best linear decoder from the compressed activation $Z$ back to $X$.

In practice, directly inverting $Z^\top Z$ is numerically unstable, since $Z$ could be rank-deficient or ill-conditioned. We therefore use a Tikhonov-regularized version \cite{tikhonov1963regularization}
\begin{equation}
    M^\star_\lambda
    =
    (Z^\top Z+\lambda I_r)^{-1}Z^\top X ,
    \qquad \lambda>0.
    \label{eq:stable_solution}
\end{equation}
The regularizer makes the reconstruction numerically stable and always well-defined, because $Z^\top Z+\lambda I_r$ is positive definite for any $\lambda>0$. In our implementation, $\lambda$ is set to a very small value, serving only as a numerical safeguard with negligible effect on the least-squares reconstruction. For simplicity, we write $M$ for $M^\star_\lambda$ below.

During the forward pass, CARE-LoRA follows standard LoRA, and the difference lies in what is kept for the backward pass. While $X$ is still available during the forward computation, CARE-LoRA uses it once to compute the batch-dependent reconstruction matrix $M$ as in Eq.~\eqref{eq:stable_solution}, then saves only
\begin{equation*}
    Z\in\mathbb{R}^{N\times r}
    \quad\text{and}\quad
    M\in\mathbb{R}^{r\times m}
\end{equation*}
for the LoRA backward pass, and discards the full activation $X\in\mathbb{R}^{N\times m}$. Therefore, the saved LoRA activation-related tensors scale as $O(Nr+mr)$ instead of $O(Nm)$. Since $r\ll \min(N,m)$, this is much smaller than storing $X$.

During the backward pass, given the upstream gradient $G=\partial\mathcal{L}/\partial Y$, CARE-LoRA computes the gradient of $B$ exactly using the stored matrix $Z$, 
\begin{equation}
    \nabla_B\mathcal{L}=(XA)^\top G=Z^\top G,
    \label{eq:care_grad_b}
\end{equation}
which is identical to Eq.~\eqref{eq:grad_B_exact}. The input gradient $\nabla_X \mathcal{L}$ is also unchanged from standard LoRA and is computed as in Eq.~\eqref{eq:grad_X_exact}. Therefore, CARE-LoRA does not introduce any reconstruction error into the gradient propagated to previous layers.

The only approximated term is the gradient of $A$. Since Eq.~\eqref{eq:grad_A_exact} requires the discarded activation $X$, CARE-LoRA uses the reconstructed activation $\widehat{X}=ZM$ in its place: 
\begin{equation}
    \widetilde{\nabla}_A\mathcal{L}
    =
    \widehat{X}^{\top}GB^\top
    =
    M^\top\bigl(Z^\top(GB^\top)\bigr).
    \label{eq:care_grad_a}
\end{equation}
In our implementation, we compute the last expression in Eq.~\eqref{eq:care_grad_a} directly, without explicitly constructing the large matrix $\widehat{X}=ZM\in\mathbb{R}^{N\times m}$. This avoids forming a full-size reconstructed activation and further reduces peak memory.

Thus, CARE-LoRA reduces activation memory by storing $Z$ and $M$ instead of the full activation $X$, while the only approximation introduced during training is the local gradient used to update $A$. The update of $B$ and the inter-layer gradient propagation remain exact as in standard LoRA. Fig.~\ref{fig:CARELORA} illustrates the forward and backward computation, and Algorithm~\ref{alg:care_lora} formalizes it in detail.

\subsection{Adaptation of the Projection-Down Subspace}
\label{subsec:subspace_adaptation}

The main limitation of LoRA-FA is not only that one of the two LoRA matrices is frozen, but that the update subspace induced by the projection-down matrix $A$ remains fixed throughout fine-tuning. In a LoRA layer with $\Delta W=AB$, every column of the low-rank update satisfies
\begin{equation}
    \Delta W_{:,j} = A B_{:,j},
\end{equation}
and therefore lies in $\operatorname{Col}(A)$. Consequently, once LoRA-FA freezes $A$, the initialization of $A$ determines a fixed $r$-dimensional subspace in which all subsequent updates to $W$ must reside, while training $B$ can only adjust the coordinates of $\Delta W$ within this subspace.

Since $A\in\mathbb{R}^{m\times r}$ with $r\ll m$, generically $A$ has full column rank $r$, so its row space is already the entire $\mathbb{R}^r$. Therefore, the expressive bottleneck lies in the column space $\operatorname{Col}(A)$, an $r$-dimensional subspace of the much larger input feature space $\mathbb{R}^m$, determining which input directions are retained by the compressed activation $Z=XA$. To understand whether CARE-LoRA truly keeps the projection-down matrix adaptive, we analyze the evolution of $\operatorname{Col}(A)$ below.

For the following analysis, we let
\begin{equation*}
    U = G B^\top \in \mathbb{R}^{N\times r},
    \qquad
    \Sigma_X = X^\top X \in \mathbb{R}^{m\times m}.
\end{equation*}
Here $U$ is the upstream gradient after being propagated through $B^\top$, and $\Sigma_X$ is the unnormalized second-moment matrix of $X$, or equivalently the unnormalized batch covariance when the activations are centered.

Using the regularized reconstruction matrix in Eq.~\eqref{eq:stable_solution}, $\widetilde{\nabla}_A\mathcal{L}$ in Eq.~\eqref{eq:care_grad_a} can be written as
\begin{align}
    \widetilde{\nabla}_A\mathcal{L}
    &= M^\top Z^\top G B^\top \notag \\
    &= X^\top Z (Z^\top Z+\lambda I_r)^{-1} Z^\top U \notag \\
    &= X^\top X A (A^\top X^\top X A+\lambda I_r)^{-1} A^\top X^\top U \notag \\
    &= \Sigma_X A K_\lambda,
    \label{eq:care_grad_a_covariance_form}
\end{align}
where
\begin{equation}
    K_\lambda
    =
    (A^\top \Sigma_X A+\lambda I_r)^{-1} A^\top X^\top U
    \in \mathbb{R}^{r\times r}.
    \label{eq:k_lambda_def}
\end{equation}

Eq.~\eqref{eq:care_grad_a_covariance_form} exposes the source of subspace adaptation. The factor $\Sigma_X A$ first maps the current projection directions through the second-moment matrix, while $K_\lambda$ only forms linear combinations of the resulting $r$ directions. 
Therefore, CARE-LoRA can introduce directions outside the current column space of $A$ when $\Sigma_X$ maps the current subspace to directions not already contained in it, as long as the backward signal does not cancel these off-subspace components.

To clearly state this intuition, we let $S=\operatorname{Col}(A)$ be the current projection-down subspace, and let 
\begin{equation*}
    P_S \in \mathbb{R}^{m\times m}
    \qquad
    \text{and} 
    \qquad
    P_S^\perp = I_m - P_S 
\end{equation*}
denote the orthogonal projectors onto $S$ and its complement, respectively. Then consider a gradient step on $A$ using the reconstructed gradient with learning rate $\eta>0$, 
\begin{equation}
    A^+ = A - \eta \widetilde{\nabla}_A\mathcal{L}
        = A - \eta \Sigma_X A K_\lambda ,
    \label{eq:a_update_care}
\end{equation}
where $A^+$ denotes the updated projection-down matrix.
Since $P_S^\perp A=0$, we have
\begin{equation}
    P_S^\perp A^+
    =
    -\eta P_S^\perp \Sigma_X A K_\lambda.
    \label{eq:outside_component}
\end{equation}
Hence, if
\begin{equation}
    P_S^\perp \Sigma_X A K_\lambda \neq 0,
    \label{eq:subspace_change_condition}
\end{equation}
then $A^+$ has at least one column with a nonzero component outside $S$.
Therefore,
\begin{equation}
    \operatorname{Col}(A^+) \nsubseteq \operatorname{Col}(A),
\end{equation}
which implies that the column space of $A$ has changed. If $A^+$ remains full column rank, then this change is a movement of the learned $r$-dimensional projection-down subspace.

The condition in Eq.~\eqref{eq:subspace_change_condition} separates into a data-dependent part and a gradient-dependent part. The data-dependent part is
\begin{equation}
    P_S^\perp \Sigma_X A \neq 0.
    \label{eq:covariance_not_invariant}
\end{equation}
This condition fails if and only if
\begin{equation}
    \operatorname{Col}(\Sigma_X A)\subseteq \operatorname{Col}(A),
\end{equation}
or equivalently, when $\operatorname{Col}(A)$ is an invariant subspace of $\Sigma_X$. However, since $\Sigma_X=X^\top X$ is induced by the heterogeneous input data, a randomly initialized $r$-dimensional subspace $\operatorname{Col}(A)$ will not in general be invariant under $\Sigma_X$, except for some special cases such as $\Sigma_X=cI_m$.

The gradient-dependent part concerns whether the right factor $K_\lambda$ cancels the off-subspace component produced by $\Sigma_XA$. Let
\begin{equation}
    D = P_S^\perp \Sigma_X A .
\end{equation}
When the data-dependent condition holds, we have $D\neq 0$. The outside component disappears only in the degenerate case
\begin{equation}
    D K_\lambda = 0.
\end{equation}
This requires $K_\lambda$ to be rank-deficient. Otherwise, if $K_\lambda$ were nonsingular, right multiplication by $K_\lambda$ could not turn a nonzero matrix $D$ into zero. Since
\begin{equation}
    K_\lambda
    =
    (A^\top \Sigma_X A+\lambda I_r)^{-1} Z^\top U,
\end{equation}
where $A^\top \Sigma_X A+\lambda I_r$ is invertible for $\lambda>0$, this can happen only when $Z^\top U$ is rank-deficient.

The rank deficiency of $Z^\top U$ can arise in only three cases. First, $Z$ is rank-deficient, i.e., $\operatorname{rank}(Z)<r$. Second, $U$ is rank-deficient, i.e., $\operatorname{rank}(U)<r$. 
These two cases are already degenerate: both $Z$ and $U$ are matrices in $\mathbb{R}^{N\times r}$, and in typical fine-tuning batches we have $N\gg r$, so they are expected to have full column rank under non-degenerate activations and backward signals.

The remaining case is more subtle: both $Z$ and $U$ have rank $r$, but $Z^\top U$ is still singular. 
This is a nongeneric event. In particular, if $Z$ and $U$ are viewed as generic full-column-rank matrices in $\mathbb{R}^{N\times r}$, then
\begin{equation}
    \Pr\!\left(\operatorname{rank}(Z^\top U)<r
    \,\middle|\,
    \operatorname{rank}(Z)=\operatorname{rank}(U)=r
    \right)=0.
\end{equation}
Combining the three cases above, under the generic non-degenerate setting, $Z^\top U$ is not rank-deficient. Therefore, when the data-dependent condition $D\neq 0$ holds, we generically have $D K_\lambda \neq 0$, so
the gradient-dependent part retains an off-subspace component in the update of $A$.

In summary, CARE-LoRA avoids the fixed-subspace limitation of LoRA-FA.
The reconstructed gradient $\widetilde{\nabla}_A\mathcal{L}=\Sigma_X A K_\lambda$ uses the input second-moment matrix to generate data-dependent update directions for $A$. Under non-degenerate activations and backward signals, these directions are not restricted to $\operatorname{Col}(A)$, allowing the projection-down subspace to evolve while keeping the memory benefit of avoiding storage of the full activation $X$.

\subsection{Memory and Computation Costs}
\label{subsec:memory_computation_cost}

We next analyze the memory and computation cost introduced by CARE-LoRA.
The analysis focuses on the tensors and matrix multiplications specific to the LoRA branch of one adapted linear layer. Computation and storage shared by standard LoRA and CARE-LoRA, including the frozen-weight path of the adapted linear layer and intermediate tensors required by non-LoRA operations, are excluded from this local comparison.

\textbf{Memory Cost.}
For a LoRA layer with input activation $X\in\mathbb{R}^{N\times m}$, projection-down matrix $A\in\mathbb{R}^{m\times r}$, and projection-up matrix $B\in\mathbb{R}^{r\times n}$, standard LoRA must retain the full activation $X$ in order to compute $\nabla_A\mathcal{L}=X^\top G B^\top$. Thus, the LoRA-side saved activation tensor scales as
\begin{equation}
    \mathcal{M}_{\mathrm{LoRA}}^{\mathrm{act}}
    =
    Nm .
    \label{eq:lora_activation_memory}
\end{equation}
CARE-LoRA keeps both $A$ and $B$ trainable. It saves the compressed activation $Z\in\mathbb{R}^{N\times r}$ and the reconstruction matrix $M\in\mathbb{R}^{r\times m}$, but discards the full activation $X$. Therefore,
\begin{equation}
    \mathcal{M}_{\mathrm{CARE}}^{\mathrm{act}}
    =
    Nr + rm
    =
    r(N+m).
    \label{eq:care_activation_memory}
\end{equation}
The activation-memory ratio to standard LoRA is
\begin{equation}
    \frac{
    \mathcal{M}_{\mathrm{CARE}}^{\mathrm{act}}
    }{
    \mathcal{M}_{\mathrm{LoRA}}^{\mathrm{act}}
    }
    =
    \frac{r(N+m)}{Nm}.
    \label{eq:care_memory_ratio}
\end{equation}
Therefore, CARE-LoRA substantially reduces the LoRA-side activation storage when the rank $r\ll\min(m,N)$.

As a concrete example, consider fine-tuning T5-Base on MNLI with sequence length $256$ and batch size $32$, so $N=8192$. For a typical hidden-size linear layer with $m=768$ and rank $r=8$, the LoRA-side activation-memory ratio of CARE-LoRA to standard LoRA is
\begin{equation*}
    \frac{8(8192+768)}{8192\times768}
    \approx
    1.14\%.
\end{equation*}

\textbf{Computation Cost.}
We count the LoRA-branch matrix-multiplication cost for one adapted linear layer up to constant factors, including both the forward and backward passes. In standard LoRA, the LoRA branch involves three multiplications with cost $Nmr$ and three multiplications with cost $Nrn$, giving
\begin{equation}
    T_{\mathrm{LoRA}}
    =
    3Nmr + 3Nrn
    =
    3Nr(m+n).
    \label{eq:lora_compute_cost}
\end{equation}
The additional cost of CARE-LoRA mainly comes from solving the reconstruction matrix $M$ and using it to compute the reconstructed $A$-gradient, giving
\begin{equation}
    T_{\mathrm{extra}}
    =
    2Nr^2 + 2mr^2 + r^3 .
    \label{eq:care_extra_compute_cost}
\end{equation}
Thus,
\begin{equation}
    T_{\mathrm{CARE}}
    =
    T_{\mathrm{LoRA}} + T_{\mathrm{extra}} .
\end{equation}
The computation cost ratio to standard LoRA is
\begin{equation}
    \frac{T_{\mathrm{CARE}}}{T_{\mathrm{LoRA}}}
    =
    1+
    \frac{r(2N+2m+r)}{3N(m+n)} .
    \label{eq:care_compute_ratio}
\end{equation}
Since $r\ll\min(m,n,N)$, the additional reconstruction terms do not change the dominant order of LoRA-branch matrix multiplications. Therefore, CARE-LoRA introduces only a small computation overhead compared with standard LoRA.

For the same T5-Base setting, taking $m=n=768$ and $r=8$ as an example, the additional LoRA-branch computation cost of CARE-LoRA over standard LoRA is
\begin{equation*}
    \frac{8(2\times8192+2\times768+8)}{3\times8192(768+768)}
    \approx
    0.38\%.
\end{equation*}

\begin{table*}[!t]
\centering
\caption{Results (\%) and Memory Usage (MiB) on Fine-Tuning the T5-Base Model on a Subset of GLUE Datasets. 
\textit{Avg. Mem} Denotes the Average Peak CUDA Memory Usage Across Different Tasks.}
\label{tab:glue_results}
\resizebox{\textwidth}{!}{
\renewcommand{\arraystretch}{1.05}
\begin{tabular}{l|ccccc|cc}
\toprule
\textbf{Method} 
& \textbf{MNLI} 
& \textbf{QNLI} 
& \textbf{SST-2} 
& \textbf{CoLA} 
& \textbf{MRPC} 
& \textbf{Avg.} 
& \textbf{Avg. Mem} \\
\midrule
LoRA ($r=8$)
& $\mathbf{85.89}_{\pm 0.02}$ 
& $93.10_{\pm 0.08}$ 
& $\underline{94.80}_{\pm 0.13}$ 
& $\underline{60.39}_{\pm 2.07}$ 
& $92.04_{\pm 0.33}$ 
& $\mathbf{85.24}_{\pm 0.48}$ 
& $6310.66$ \\
LoRAct ($r=8$)
& $\underline{85.85}_{\pm 0.14}$ 
& $\underline{93.16}_{\pm 0.05}$ 
& $\mathbf{94.92}_{\pm 0.52}$ 
& $59.55_{\pm 0.81}$ 
& $\underline{92.22}_{\pm 0.32}$ 
& $85.14_{\pm 0.06}$ 
& $5185.06$ \\
LoRA-FA ($r=8$)
& $84.61_{\pm 0.16}$ 
& $92.76_{\pm 0.12}$ 
& $94.34_{\pm 0.18}$ 
& $46.08_{\pm 1.28}$ 
& $87.05_{\pm 0.95}$ 
& $80.97_{\pm 0.40}$ 
& $\mathbf{4988.06}$ \\
CARE-LoRA ($r=8$)
& $85.59_{\pm 0.11}$ 
& $\mathbf{93.19}_{\pm 0.13}$ 
& $94.46_{\pm 0.40}$ 
& $\mathbf{60.46}_{\pm 1.66}$ 
& $\mathbf{92.41}_{\pm 0.45}$ 
& $\underline{85.22}_{\pm 0.29}$ 
& $\underline{5019.37}$ \\
\bottomrule
\end{tabular}
}
\end{table*}

\begin{table*}[!t]
\centering
\caption{Results (\%) and Memory Usage (MiB) on Fine-Tuning the T5-Base Model on a Subset of SuperGLUE Datasets. 
\textit{Avg. Mem} Denotes the Average Peak CUDA Memory Usage Across Different Tasks.}
\label{tab:superglue_results}
\resizebox{\textwidth}{!}{
\renewcommand{\arraystretch}{1.05}
\begin{tabular}{l|ccccc|cc}
\toprule
\textbf{Method} 
& \textbf{BoolQ} 
& \textbf{CB} 
& \textbf{COPA} 
& \textbf{RTE} 
& \textbf{WiC} 
& \textbf{Avg.} 
& \textbf{Avg. Mem} \\
\midrule
LoRA ($r=8$)
& $\underline{81.01}_{\pm 0.51}$ 
& $\underline{88.03}_{\pm 3.34}$ 
& $\underline{70.00}_{\pm 3.00}$ 
& $\mathbf{81.47}_{\pm 1.37}$ 
& $68.55_{\pm 0.09}$ 
& $77.81_{\pm 0.32}$ 
& $7274.58$ \\
LoRAct ($r=8$)
& $\mathbf{81.48}_{\pm 0.53}$ 
& $87.44_{\pm 1.79}$ 
& $\mathbf{70.67}_{\pm 1.53}$ 
& $80.63_{\pm 0.75}$ 
& $\mathbf{69.17}_{\pm 0.86}$ 
& $\underline{77.88}_{\pm 0.60}$ 
& $5964.20$ \\
LoRA-FA ($r=8$)
& $79.14_{\pm 0.18}$ 
& $76.44_{\pm 4.92}$ 
& $58.00_{\pm 2.00}$ 
& $72.32_{\pm 0.55}$ 
& $68.13_{\pm 0.59}$ 
& $70.81_{\pm 1.00}$ 
& $\mathbf{5759.73}$ \\
CARE-LoRA ($r=8$)
& $81.00_{\pm 0.68}$ 
& $\mathbf{90.03}_{\pm 1.40}$ 
& $\underline{70.00}_{\pm 1.00}$ 
& $\underline{80.99}_{\pm 0.55}$ 
& $\underline{68.86}_{\pm 0.36}$ 
& $\mathbf{78.18}_{\pm 0.50}$ 
& $\underline{5791.07}$ \\
\bottomrule
\end{tabular}
}
\end{table*}


\section{Experiments}
\label{sec:experiments}

We evaluate CARE-LoRA on three categories of fine-tuning tasks. For \textit{natural language understanding} (NLU), we fine-tune T5-Base~\cite{raffel2020t5} on GLUE~\cite{wang2018glue} and SuperGLUE~\cite{wang2019superglue}. For \textit{natural language generation} (NLG), we fine-tune Mistral-7B-v0.3~\cite{jiang2023mistral} for mathematical reasoning, code generation, and instruction following. For \textit{diffusion-based image generation}, we fine-tune SD3-Medium~\cite{esser2024sd3} on DreamBooth~\cite{ruiz2023dreambooth}. Together, these tasks cover diverse domains, model architectures, and sequence lengths ranging from $256$ to $2048$ tokens, providing a comprehensive assessment of CARE-LoRA across a broad range of fine-tuning scenarios.

\subsection{Baselines}
\label{subsec:baselines}

We mainly compare the following methods:
\begin{itemize}
    \item \textbf{LoRA}~\cite{hu2022lora} applies low-rank adaptation to all linear layers of the frozen backbone.
    \item \textbf{LoRAct}~\cite{shi2025loract} compresses cached activations via low-rank decomposition. We apply it only to LoRA layers to match CARE-LoRA under a comparable memory budget.
    \item \textbf{LoRA-FA}~\cite{zhang2023lorafa} freezes $A$ and updates only $B$ to avoid storing full activations.
    \item \textbf{CARE-LoRA} (ours) keeps both $A$ and $B$ trainable while avoiding storing full activations.
\end{itemize}

For a fair comparison, all methods use the same backbone, training data, target modules, optimizer (AdamW~\cite{loshchilov2019adamw}), learning rate, precision, batch size, and training steps within each task, implemented in a unified PEFT-based codebase~\cite{mangrulkar2022peft}. In the LLM fine-tuning experiments, we additionally report CARE-LoRA at $r=16$, which is still more memory-efficient than LoRA and LoRAct at $r=8$. The only method-specific hyperparameter is decomposition rank $k$ of LoRAct, set to $64$ for T5-Base and $128$ for Mistral-7B-v0.3. All results are averaged over three common seeds ($0$, $21$, $42$) with standard deviation reported. The best result in each column is boldfaced and the second-best is underlined. Additional analyses are presented later in this section.


\begin{table*}[!t]
\centering
\caption{Results (\%) and Memory Usage (MiB) on Fine-Tuning the Mistral-7B-v0.3 Model on Three LLM Fine-Tuning Tasks. 
\textit{Avg. Mem} Denotes the Average Peak CUDA Memory Usage Across Different Tasks.}
\label{tab:llm_results}
\resizebox{\textwidth}{!}{
\renewcommand{\arraystretch}{1.05}
\begin{tabular}{l|cccc|cc}
\toprule
\multirow{2}{*}[-0.6ex]{\textbf{Method}}
& \multicolumn{1}{c}{\textbf{Math}} 
& \multicolumn{1}{c}{\textbf{Code}} 
& \multicolumn{2}{c|}{\textbf{Instruct}} 
& \multicolumn{2}{c}{\textbf{Overall}} \\
\cmidrule(lr){2-2}
\cmidrule(lr){3-3}
\cmidrule(lr){4-5}
\cmidrule(lr){6-7}
& \textbf{GSM8K} 
& \textbf{HumanEval} 
& \textbf{Prompt-Strict} 
& \textbf{Instruction-Strict} 
& \textbf{Avg.} 
& \textbf{Avg. Mem} \\
\midrule
LoRA ($r=8$)
& $71.22_{\pm 0.70}$ 
& $42.89_{\pm 2.54}$ 
& $45.22_{\pm 1.23}$ 
& $55.64_{\pm 0.42}$ 
& $53.74_{\pm 0.50}$ 
& $29214.27$ \\
PiSSA ($r=8$)
& $\underline{71.44}_{\pm 0.19}$ 
& $41.46_{\pm 2.44}$ 
& $45.04_{\pm 0.47}$ 
& $55.92_{\pm 0.70}$ 
& $53.47_{\pm 0.76}$ 
& $29208.54$ \\
DoRA ($r=8$)
& $70.99_{\pm 0.46}$ 
& $\mathbf{43.90}_{\pm 2.79}$ 
& $\underline{46.03}_{\pm 0.92}$ 
& $\underline{56.28}_{\pm 0.69}$ 
& $\underline{54.30}_{\pm 1.13}$ 
& $30701.65$ \\
\midrule
LoRAct ($r=8$)
& $70.91_{\pm 0.43}$ 
& $\underline{43.50}_{\pm 0.70}$ 
& $44.24_{\pm 1.05}$ 
& $55.00_{\pm 0.57}$ 
& $53.41_{\pm 0.51}$ 
& $25185.51$ \\
LoRA-FA ($r=8$)
& $67.65_{\pm 0.57}$ 
& $40.85_{\pm 0.61}$ 
& $39.49_{\pm 2.03}$ 
& $51.48_{\pm 2.00}$ 
& $49.87_{\pm 0.76}$ 
& $\mathbf{24214.74}$ \\
CARE-LoRA ($r=8$)
& $70.68_{\pm 0.12}$ 
& $42.68_{\pm 2.20}$ 
& $44.49_{\pm 0.28}$ 
& $54.64_{\pm 0.14}$ 
& $53.12_{\pm 0.46}$ 
& $\underline{24469.27}$ \\
CARE-LoRA ($r=16$)
& $\mathbf{71.82}_{\pm 0.23}$ 
& $43.29_{\pm 1.61}$ 
& $\mathbf{46.27}_{\pm 0.77}$ 
& $\mathbf{56.47}_{\pm 0.52}$ 
& $\mathbf{54.47}_{\pm 0.20}$ 
& $24948.65$ \\
\bottomrule
\end{tabular}
}
\end{table*}



\begin{table*}[!t]
\centering
\caption{Results (\%) and Memory Usage (MiB) on Fine-Tuning Diffusion Model SD3-Medium on DreamBooth, Averaged Over Ten Subjects and Three Seeds. 
\textit{Avg. Mem} Denotes the Average Peak CUDA Memory Usage.}
\label{tab:diffusion_results}
\small
\setlength{\tabcolsep}{12pt}
\renewcommand{\arraystretch}{1.05}
\begin{tabular}{l|ccc|cc}
\toprule
\textbf{Method} 
& \textbf{DINO} 
& \textbf{CLIP-I} 
& \textbf{CLIP-T} 
& \textbf{Avg.} 
& \textbf{Avg. Mem} \\
\midrule
LoRA ($r=8$)
& $43.18_{\pm 0.11}$ 
& $69.18_{\pm 0.25}$ 
& $\mathbf{28.01}_{\pm 0.05}$ 
& $46.79_{\pm 0.11}$ 
& $9882.30$ \\
CARE-LoRA ($r=16$)
& $\mathbf{43.40}_{\pm 0.15}$ 
& $\mathbf{69.29}_{\pm 0.37}$ 
& $28.00_{\pm 0.11}$ 
& $\mathbf{46.89}_{\pm 0.16}$ 
& $\mathbf{8630.47}$ \\
\bottomrule
\end{tabular}
\end{table*}


\subsection{Results on Natural Language Understanding}
\label{subsec:results_nlu}

\textbf{Setting.} We fine-tune T5-Base on five widely used tasks from each of GLUE (MNLI, QNLI, SST-2, CoLA, MRPC) and SuperGLUE (BoolQ, CB, COPA, RTE, WiC), following the original text-to-text formulation. We report each task's standard metric, accuracy for most tasks, Matthews correlation for CoLA, and F1 for MRPC and CB. Following the protocol described in Sec.~\ref{subsec:baselines}, all methods use rank $r=8$. Results are summarized in Tables~\ref{tab:glue_results} and~\ref{tab:superglue_results}.

\textbf{Results.} At the same rank, CARE-LoRA achieves strong performance while keeping a memory footprint close to LoRA-FA. On GLUE, CARE-LoRA obtains an average score of 85.22, essentially matching standard LoRA at 85.24 and outperforming LoRA-FA by 4.25 points. On SuperGLUE, CARE-LoRA achieves the best average score of 78.18, improving over LoRA and LoRAct while surpassing LoRA-FA by 7.37 points. In terms of memory, CARE-LoRA reduces average peak memory by about $20\%$ on both GLUE and SuperGLUE relative to LoRA, while requiring less than $1\%$ additional memory compared with LoRA-FA. These results show that the reconstructed update of $A$ effectively mitigates the fixed-subspace limitation of LoRA-FA without giving up its activation-memory advantage.

\subsection{Results on Large Language Models}
\label{subsec:results_llm}

\textbf{Setting.} We evaluate CARE-LoRA on Mistral-7B-v0.3 across three tasks: mathematical reasoning, code generation, and instruction following. Each task focuses on a specific capability and uses well-established datasets and metrics for training and evaluation, as detailed below:
\begin{itemize}
    \item \textbf{Mathematical reasoning.} We fine-tune on a 100k subset of MetaMathQA~\cite{yu2024metamath} with sequence length under $512$ tokens and report accuracy on GSM8K~\cite{cobbe2021gsm8k}.
    \item \textbf{Code generation.} We fine-tune on a 100k subset of OpenCodeInstruct~\cite{ahmad2025opencodeinstruct} with sequence length under $1024$ tokens and report pass@1 on HumanEval~\cite{chen2021codex}.
    \item \textbf{Instruction following.} We fine-tune on a 50k subset of SmolTalk~\cite{allal2025smollm2} with sequence length under $2048$ tokens and report prompt-level and instruction-level strict accuracy on IFEval~\cite{zhou2023ifeval}.
\end{itemize}

Here, we additionally compare against two widely used LoRA variants that target performance improvement rather than memory efficiency. DoRA~\cite{Liu2024DoRAWL} decomposes pretrained weights into magnitude and direction, and we use its memory-saving version in our experiments. PiSSA~\cite{meng2024pissa} initializes the LoRA factors from the principal components of the pretrained weight via SVD. Since CARE-LoRA reduces activation memory, the saved memory can be reinvested into a higher rank. We therefore additionally evaluate CARE-LoRA at $r=16$ as a performance-oriented setting for comparison with other baselines.

\textbf{Results.} Table~\ref{tab:llm_results} reports results on Mistral-7B-v0.3. At $r=8$, CARE-LoRA achieves competitive performance compared with standard LoRA while substantially reducing peak memory. At $r=16$, CARE-LoRA achieves the highest overall average of 54.47 and leads all methods on GSM8K, Prompt-Strict, and Instruction-Strict, while also outperforming LoRA on HumanEval. DoRA achieves the second-best overall average, yet CARE-LoRA requires 18.7\% less peak memory. LoRA-FA attains the lowest memory footprint by freezing $A$ throughout fine-tuning, but CARE-LoRA uses only 3\% more memory while improving the overall average by 4.6 points. Excluding LoRA-FA, CARE-LoRA at $r=16$ achieves the best accuracy while using less peak memory than every baseline.

\subsection{Results on Diffusion Models}
\label{subsec:results_diffusion}

\textbf{Setting.} We fine-tune SD3-Medium on DreamBooth personalization. We select ten subjects fixed prior to training (dog, dog2, cat, backpack, bear plushie, can, candle, colorful sneaker, duck toy, and teapot), covering both live subjects and a diverse range of common object categories, so that conclusions are not tied to a single subject type. A separate adapter is trained per subject, and personalization quality is measured by DINO~\cite{caron2021dino} and CLIP-I~\cite{radford2021clip} for subject fidelity and CLIP-T for text alignment, averaged across subjects and seeds. Following the higher-rank setting studied above, we compare standard LoRA at $r=8$ with CARE-LoRA at $r=16$ to test whether the saved activation memory can also be used effectively in diffusion-transformer fine-tuning.

\textbf{Results.} Table~\ref{tab:diffusion_results} shows that CARE-LoRA at $r=16$ achieves a higher overall average of 46.89, compared to 46.79 for LoRA. Subject fidelity improves on both metrics, with DINO rising from 43.18 to 43.40 and CLIP-I from 69.18 to 69.29, while CLIP-T remains essentially unchanged at 28.00. At the same time, peak memory drops from 9882 to 8630 MiB, a reduction of approximately 13\%. These results confirm that the memory benefit of CARE-LoRA extends to the diffusion-transformer architecture without sacrificing personalization quality.

\begin{table*}[!t]
\centering
\caption{Activation Memory (MiB) of LoRA Layers Under Different Batch Sizes When Fine-Tuning T5-Base on MNLI. 
\textit{Memory Ratio} Is Computed as the Activation Memory of CARE-LoRA Divided by That of LoRA. 
Smaller Values Indicate Lower Activation Memory Cost.}
\label{tab:activation_memory}
\small
\setlength{\tabcolsep}{12pt}
\renewcommand{\arraystretch}{1.05}
\begin{tabular}{l|ccccc}
\toprule
\textbf{Batch Size} 
& 8 
& 16 
& 32 
& 64 
& 128 \\
\midrule
LoRA ($r=8$)
& $528.77$ 
& $1057.55$ 
& $2115.09$ 
& $4230.19$ 
& $8460.38$ \\
CARE-LoRA ($r=16$)
& $\mathbf{24.61}$ 
& $\mathbf{36.84}$ 
& $\mathbf{61.31}$ 
& $\mathbf{110.25}$ 
& $\mathbf{208.13}$ \\
\midrule
Memory Ratio
& $4.65\%$ 
& $3.48\%$ 
& $2.90\%$ 
& $2.61\%$ 
& $2.46\%$ \\
\bottomrule
\end{tabular}
\end{table*}

\subsection{Memory Analysis}
\label{subsec:memory_analysis}

\textbf{Peak memory.} Across all four benchmarks (Tables~\ref{tab:glue_results}, \ref{tab:superglue_results}, \ref{tab:llm_results}, and~\ref{tab:diffusion_results}), CARE-LoRA consistently reduces peak memory relative to standard LoRA. In the T5 experiments, CARE-LoRA at the same rank reduces the total peak memory by roughly $20\%$ relative to LoRA on both GLUE and SuperGLUE. In the larger Mistral-7B-v0.3 and SD3-Medium experiments, CARE-LoRA uses rank $r=16$ and still requires less peak memory than standard LoRA at $r=8$. Since model weights, LoRA parameters, and optimizer states are fixed across methods and occupy a substantial share of total memory, the actual reduction on the activation budget is considerably larger in proportion.

\textbf{Activation memory of LoRA layers.} Table~\ref{tab:activation_memory} further isolates the savings on the LoRA layers, reporting their activation memory when fine-tuning T5-Base on MNLI under varying batch sizes. CARE-LoRA reduces this memory to only 2.5\% to 4.7\% of LoRA's. The ratio decreases as batch size grows, because the compressed activation $Z \in \mathbb{R}^{N \times r}$ scales with $N$ while the rank-$r$ reconstruction matrix $M$ remains fixed, so the per-sample overhead of CARE-LoRA diminishes relative to the full activation $X \in \mathbb{R}^{N \times m}$ retained by LoRA at larger batch sizes.


\begin{table*}[!t]
\centering
\caption{Per-Step Training Time (Seconds) on Fine-Tuning Mistral-7B-v0.3, Averaged Over Three Seeds on Different Training Tasks. 
\textit{Time Ratio} Is Computed as Each Method's Average Per-Step Time Divided by That of LoRA.}
\label{tab:llm_time}
\small
\setlength{\tabcolsep}{10pt}
\renewcommand{\arraystretch}{1.05}
\begin{tabular}{l|ccc|cc}
\toprule
\textbf{Method} 
& \textbf{Math} 
& \textbf{Code} 
& \textbf{Instruct} 
& \textbf{Avg. Time} 
& \textbf{Time Ratio} \\
\midrule
LoRA ($r=8$)
& $2.26$ 
& $4.66$ 
& $11.32$ 
& $6.08$ 
& $1.00\times$ \\
\midrule
DoRA ($r=8$)
& $7.49$ 
& $15.29$ 
& $38.91$ 
& $20.56$ 
& $3.39\times$ \\
LoRAct ($r=8$)
& $3.25$ 
& $6.42$ 
& $15.48$ 
& $8.38$ 
& $1.38\times$ \\
LoRA-FA ($r=8$)
& $\mathbf{2.24}$ 
& $\mathbf{4.32}$ 
& $\mathbf{10.83}$ 
& $\mathbf{5.80}$ 
& $\mathbf{0.95}\times$ \\
CARE-LoRA ($r=16$)
& $\underline{2.34}$ 
& $\underline{4.74}$ 
& $\underline{11.59}$ 
& $\underline{6.22}$ 
& $\underline{1.02}\times$ \\
\bottomrule
\end{tabular}
\end{table*}


\subsection{Time Analysis}
\label{subsec:time_analysis}

\begin{table}[!t]
\centering
\caption{Comparison With Gradient Checkpointing Under a Similar Memory Budget When Fine-Tuning Mistral-7B-v0.3 on the Math Task. 
Per-Step Training Time (Seconds) and Peak CUDA Memory (MiB) Are Averaged Over Three Seeds.}
\label{tab:gradckpt_time}
\small
\setlength{\tabcolsep}{5pt}
\renewcommand{\arraystretch}{1.05}
\begin{tabular}{l|cc}
\toprule
\textbf{Method} 
& \textbf{Time / Step} 
& \textbf{Peak Mem} \\
\midrule
LoRA + GC ($r=8$)
& $2.46$ 
& $25992.36$ \\
CARE-LoRA ($r=16$)
& $\mathbf{2.34}$ 
& $\mathbf{25272.29}$ \\
\bottomrule
\end{tabular}
\end{table}

\textbf{Per-step training time.} Table~\ref{tab:llm_time} compares the average per-step training time on Mistral-7B-v0.3, normalized by LoRA. We omit PiSSA, whose per-step computation matches LoRA once initialized. For CARE-LoRA, we report the costlier $r=16$ configuration as a conservative case. DoRA combines per-step weight renormalization with backward-pass recomputation in its memory-saving implementation, making it $3.39\times$ slower than LoRA despite its competitive accuracy. LoRAct performs an online low-rank decomposition of the cached activation, raising the per-step time to $1.38\times$ that of LoRA. LoRA-FA is the fastest but least accurate baseline. CARE-LoRA at $r=16$ requires only $1.02\times$ the per-step time of LoRA at $r=8$, nearly matching it. Combined with the accuracy and memory results above, these findings show that CARE-LoRA is the only method that is simultaneously memory-efficient, fast, and accurate, whereas each baseline sacrifices at least one of these three axes.

\textbf{Comparison with gradient checkpointing.} Gradient checkpointing (GC)~\cite{chen2016training} is a standard approach to activation-memory reduction, trading compute for memory by recomputing activations during the backward pass. To enable a direct comparison, we apply GC to the first 25\% of Mistral-7B-v0.3's decoder blocks, i.e., 8 of 32, during LoRA fine-tuning on the Math task, so that its memory budget roughly matches that of CARE-LoRA at $r=16$. Since GC does not alter the optimization, LoRA+GC retains the accuracy of LoRA, so we focus on memory and time. As shown in Table~\ref{tab:gradckpt_time}, under a comparable memory budget, CARE-LoRA at $r=16$ outperforms LoRA+GC in both per-step time (2.34 vs.\ 2.46 s) and peak memory (25272 vs.\ 25992 MiB). The advantage is structural: GC saves memory by recomputing entire blocks, whereas CARE-LoRA exploits the inherent low-rank structure of LoRA, substantially reducing compute overhead at the same memory level.

\section{Conclusion}
\label{sec:conclusion}

In this paper, we propose CARE-LoRA, a data-aware framework that replaces the full activation retained by standard LoRA with a compressed activation and a lightweight reconstruction matrix, keeping both LoRA matrices trainable at a memory cost close to LoRA-FA. Building on this reconstruction, we proved that the projection-down subspace is not frozen: under non-degenerate activations and backward signals, it can evolve during training, unlike the fixed subspace that LoRA-FA imposes. Extensive experiments across diverse tasks show that CARE-LoRA closes most of the accuracy gap LoRA-FA leaves relative to standard LoRA, and that reinvesting the saved memory into a higher rank lets it surpass standard LoRA and representative variants with only a negligible increase in training time, achieving a favorable balance of memory efficiency, speed, and accuracy.

\bibliographystyle{IEEEtran}
\bibliography{references}

\newpage

\appendices
\numberwithin{equation}{section}
\renewcommand{\theequation}{\thesection\arabic{equation}}

\section{Proofs for Subspace Adaptation}
\label{app:subspace_proofs}

This appendix gives the detailed derivations for the subspace adaptation analysis in Section~III-C. We first recall, with explicit intermediate steps, how the reconstructed gradient of $A$ yields a condition for the projection-down subspace to change. We then give complete proofs of two claims that Section~III-C states without proof, namely that the data-dependent part is nonzero except on a measure-zero set of initializations, and that the matrix $Z^\top U$ governing the gradient-dependent part is generically nonsingular.

\subsection{Gradient Form and the Movement Condition}
\label{app:proof_covariance_form}

Recall that
\begin{equation*}
    Z = XA,\qquad
    U = G B^\top,\qquad
    \Sigma_X = X^\top X.
\end{equation*}
CARE-LoRA stores the regularized reconstruction matrix
\begin{equation}
    M = (Z^\top Z+\lambda I_r)^{-1}Z^\top X.
\end{equation}
The reconstructed gradient for $A$ is
\begin{equation}
    \widetilde{\nabla}_A\mathcal{L}
    =
    M^\top Z^\top U.
\end{equation}
Taking the transpose of $M$ gives
\begin{equation}
    M^\top
    =
    X^\top Z (Z^\top Z+\lambda I_r)^{-1}.
\end{equation}
Therefore,
\begin{align}
    \widetilde{\nabla}_A\mathcal{L}
    &=
    X^\top Z (Z^\top Z+\lambda I_r)^{-1} Z^\top U \notag \\
    &=
    X^\top X A
    (A^\top X^\top X A+\lambda I_r)^{-1}
    A^\top X^\top U \notag \\
    &=
    \Sigma_X A
    (A^\top \Sigma_X A+\lambda I_r)^{-1}
    A^\top X^\top U.
\end{align}
Defining
\begin{equation}
    K_\lambda =
    (A^\top \Sigma_X A+\lambda I_r)^{-1}
    A^\top X^\top U
\end{equation}
yields
\begin{equation}
    \widetilde{\nabla}_A\mathcal{L}
    =
    \Sigma_X A K_\lambda.
\end{equation}
Because $\lambda>0$ and $A^\top\Sigma_X A$ is positive semidefinite, $A^\top\Sigma_X A+\lambda I_r$ is positive definite and therefore invertible. Thus, the regularized form is always well-defined.

We now use this form to determine when a gradient step moves $A$ outside its current column space. Let
\begin{equation*}
    S=\operatorname{Col}(A),
    \qquad
    P_S^\perp=I_m-P_S.
\end{equation*}
Since every column of $A$ lies in $S$, we have
\begin{equation}
    P_S^\perp A = 0.
\end{equation}
For a gradient step
\begin{equation}
    A^+ = A - \eta\Sigma_X A K_\lambda,
\end{equation}
left-multiplying by $P_S^\perp$ gives
\begin{align}
    P_S^\perp A^+
    &=
    P_S^\perp A
    -
    \eta P_S^\perp\Sigma_X A K_\lambda \notag \\
    &=
    -\eta P_S^\perp\Sigma_X A K_\lambda.
\end{align}
Therefore, if
\begin{equation}
    P_S^\perp\Sigma_X A K_\lambda\neq 0,
\end{equation}
then $P_S^\perp A^+\neq 0$. Hence at least one column of $A^+$ has a nonzero component in $S^\perp$, which implies
\begin{equation}
    \operatorname{Col}(A^+)\nsubseteq S
    =
    \operatorname{Col}(A).
\end{equation}
If $A^+$ has full column rank, then $\operatorname{Col}(A^+)$ is again an $r$-dimensional subspace, and the learned projection-down subspace has changed. As stated in Section~III-C, this condition separates into a data-dependent part, $P_S^\perp\Sigma_X A\neq0$, and a gradient-dependent part concerning whether $K_\lambda$ cancels it. The next two subsections prove that each part holds except on a measure-zero set of degenerate cases.

\subsection{Invariance Analysis of the Data-Dependent Part}
\label{app:proof_invariant_subspace}

Section~III-C states that the data-dependent condition $P_S^\perp\Sigma_X A\neq0$ fails exactly when $\operatorname{Col}(A)$ is an invariant subspace of $\Sigma_X$, but does not prove this equivalence. We prove it here, and then justify why such an invariant subspace is nongeneric.

We show that
\begin{equation}
    P_S^\perp\Sigma_X A = 0
\end{equation}
is equivalent to $S=\operatorname{Col}(A)$ being invariant under $\Sigma_X$.

First, if $P_S^\perp\Sigma_X A=0$, then every column of $\Sigma_X A$ lies in $S$, so
\begin{equation}
    \operatorname{Col}(\Sigma_X A)\subseteq S.
\end{equation}
For any vector $s\in S$, there exists $v\in\mathbb{R}^r$ such that $s=Av$. Then
\begin{equation}
    \Sigma_X s = \Sigma_X A v \in \operatorname{Col}(\Sigma_X A)\subseteq S.
\end{equation}
Thus $\Sigma_X S\subseteq S$, i.e., $S$ is an invariant subspace of $\Sigma_X$.

Conversely, if $\Sigma_X S\subseteq S$, then each column of $A$ belongs to $S$, and therefore each column of $\Sigma_X A$ also belongs to $S$. Hence
\begin{equation}
    P_S^\perp\Sigma_X A=0.
\end{equation}
This proves
\begin{equation}
    P_S^\perp\Sigma_X A=0
    \quad\Longleftrightarrow\quad
    \Sigma_X S\subseteq S.
\end{equation}

It remains to argue that a randomly initialized $\operatorname{Col}(A)$ is essentially never an invariant subspace of $\Sigma_X$. For a fixed non-scalar symmetric matrix $\Sigma_X$, a continuously sampled $r$-dimensional subspace is not an invariant subspace with probability one. Indeed, the invariant subspaces of a symmetric matrix are constrained by its eigenspace decomposition. Every invariant subspace decomposes as a direct sum of subspaces drawn from the individual eigenspaces of $\Sigma_X$, so an invariant $r$-dimensional subspace must align with this eigenstructure rather than sit in a generic position. Unless $\Sigma_X$ is proportional to the identity, this alignment requirement confines the invariant $r$-dimensional subspaces to a measure-zero subset of the Grassmannian of all $r$-dimensional subspaces, so a continuously distributed initialization of $\operatorname{Col}(A)$ avoids them almost surely. The isotropic case $\Sigma_X=cI_m$ is the degenerate exception, since every subspace is invariant.

\subsection{Genericity of the Gradient-Dependent Part}
\label{app:generic_ztu}

We now turn to the gradient-dependent part. Let
\begin{equation}
    D = P_S^\perp\Sigma_X A.
\end{equation}
Even when $D\neq 0$, the outside-subspace component of the CARE-LoRA gradient is
\begin{equation*}
    D K_\lambda.
\end{equation*}
Thus the outside component vanishes if and only if every column of $K_\lambda$ lies in the null space of $D$
\begin{equation}
    D K_\lambda = 0
    \quad\Longleftrightarrow\quad
    \operatorname{Col}(K_\lambda)\subseteq \operatorname{Null}(D).
\end{equation}
This is a backward-signal degeneracy. Since
\begin{align}
    K_\lambda
    &=
    (A^\top\Sigma_X A+\lambda I_r)^{-1}A^\top X^\top U \notag \\
    &=
    (A^\top\Sigma_X A+\lambda I_r)^{-1}Z^\top U,
\end{align}
and the first factor is invertible for $\lambda>0$, this degeneracy is governed entirely by the rank of $Z^\top U$. Because an invertible factor preserves rank, $K_\lambda$ is singular if and only if $Z^\top U$ is. For example, if $U=GB^\top=0$, then $Z^\top U=0$, so $K_\lambda=0$ and no update to $A$ is produced. This happens at the first optimization step under the common LoRA initialization $B=0$.

Section~III-C asserts that, away from such special cases, $Z^\top U$ is generically nonsingular, without proof. We prove this claim below.

\begin{lemma}[Generic nonsingularity of $Z^\top U$]
Let $N\ge r$ and let $Z,U\in\mathbb{R}^{N\times r}$. Suppose that the joint distribution of $(Z,U)$ is absolutely continuous with respect to the Lebesgue measure on $\mathbb{R}^{N\times r}\times\mathbb{R}^{N\times r}$. Then
\begin{equation}
    \Pr\!\left(\operatorname{rank}(Z^\top U)<r\right)=0 .
\end{equation}
In particular, since $\operatorname{rank}(Z)=r$ and $\operatorname{rank}(U)=r$ also hold with probability one, we have
\begin{equation}
    \Pr\!\left(\operatorname{rank}(Z^\top U)<r
    \,\middle|\,
    \operatorname{rank}(Z)=\operatorname{rank}(U)=r
    \right)=0 .
\end{equation}
\end{lemma}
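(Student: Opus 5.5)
The plan is to write the singular event as the zero set of a polynomial and use the standard fact that the zero set of a polynomial that is not identically zero has Lebesgue measure zero. Define
\begin{equation*}
    f(Z,U) = \det(Z^\top U),
\end{equation*}
viewed as a function on $\mathbb{R}^{N\times r}\times\mathbb{R}^{N\times r}\cong\mathbb{R}^{2Nr}$. Each entry of $Z^\top U$ is a bilinear polynomial in the entries of $Z$ and $U$, so $f$ is a polynomial of degree $2r$ in the $2Nr$ real variables. The event $\{\operatorname{rank}(Z^\top U)<r\}$ is exactly the zero set $\{f=0\}$.

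The first step is to show that $f$ is not the zero polynomial. This is where the hypothesis $N\ge r$ is used. I will exhibit one point where $f\neq 0$: take
\begin{equation*}
    Z_0=U_0=\begin{bmatrix} I_r\\ 0\end{bmatrix}\in\mathbb{R}^{N\times r}.
\end{equation*}
Then $Z_0^\top U_0=I_r$, so $f(Z_0,U_0)=1$.

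The second step is the measure-zero lemma: a nonzero polynomial $p$ on $\mathbb{R}^d$ has a Lebesgue-null zero set. I will cite it as standard, and sketch it if needed by induction on $d$ via Fubini. Write $p$ as a polynomial in the last variable whose coefficients are polynomials in the first $d-1$ variables, with some coefficient nonzero. By the induction hypothesis, that coefficient vanishes only on a null set of the first $d-1$ variables. Off that null set, the one-variable slice of $p$ is a nonzero polynomial with finitely many roots, so every slice of the zero set is null. Fubini then gives measure zero. I expect this to be the only nontrivial ingredient, and it is routine.

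The third step transfers this to the distribution of $(Z,U)$. The set $\{f=0\}$ is closed, hence Borel, and it has Lebesgue measure zero. Absolute continuity of the joint law therefore gives $\Pr(f(Z,U)=0)=0$, which is the first claim.

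For the conditional statement, I apply the same argument to $\det(Z^\top Z)$ and $\det(U^\top U)$. These are nonzero polynomials, each equal to $1$ at $Z_0$ and $U_0$ respectively. Since the marginals of an absolutely continuous joint law are absolutely continuous, this gives $\operatorname{rank}(Z)=\operatorname{rank}(U)=r$ almost surely. The conditioning event thus has probability one, so the conditional probability equals $\Pr(\operatorname{rank}(Z^\top U)<r,\ \text{conditioning event})/1\le \Pr(\operatorname{rank}(Z^\top U)<r)=0$.
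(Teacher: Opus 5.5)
Your proposal is correct and matches the paper's proof essentially step for step. The paper also writes the singular event as the zero set of the polynomial $\det(Z^\top U)$, shows the polynomial is not identically zero at $Z=U=[I_r;0]$, and invokes absolute continuity; it then gets full column rank of $Z$ and $U$ from the same measure-zero argument, phrased as the vanishing of all $r\times r$ minors where you use the equivalent $\det(Z^\top Z)$ and $\det(U^\top U)$, while your Fubini sketch and explicit conditional-probability bound only fill in details the paper leaves implicit.
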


\begin{proof}
The condition $\operatorname{rank}(Z^\top U)<r$ is equivalent to
\begin{equation}
    \det(Z^\top U)=0 .
\end{equation}
The function
\begin{equation}
    p(Z,U)=\det(Z^\top U)
\end{equation}
is a polynomial in the entries of $Z$ and $U$. This polynomial is not identically zero. For example, taking
\begin{equation}
    Z=
    \begin{bmatrix}
        I_r\\
        0
    \end{bmatrix},
    \qquad
    U=
    \begin{bmatrix}
        I_r\\
        0
    \end{bmatrix}
\end{equation}
gives $Z^\top U=I_r$ and hence $p(Z,U)=1$. Therefore, the zero set $\{(Z,U):p(Z,U)=0\}$ is a proper algebraic variety and has Lebesgue measure zero. Since $(Z,U)$ has an absolutely continuous joint distribution, the probability of this event is zero.

The same argument also implies that $\operatorname{rank}(Z)=r$ and $\operatorname{rank}(U)=r$ hold with probability one, because the vanishing of all $r\times r$ minors is again a measure-zero algebraic condition. This proves the conditional statement.
\end{proof}

Together with Section~\ref{app:proof_invariant_subspace}, this shows that, under the assumptions stated above, the set of initializations and backward signals for which $P_S^\perp\Sigma_X A K_\lambda=0$ has measure zero. Hence, the projection-down subspace generically evolves under CARE-LoRA.

\section{Detailed Memory Analysis}
\label{app:memory_details}

This appendix gives the detailed derivation for the memory analysis of the main paper. We analyze one adapted linear layer
\begin{equation*}
\begin{aligned}
    Y &= X(W+AB), \\
    X &\in\mathbb{R}^{N\times m},\quad
    A\in\mathbb{R}^{m\times r},\quad 
    B \in\mathbb{R}^{r\times n}.
\end{aligned}
\end{equation*}
The count below only includes tensors specific to the LoRA branch. The frozen backbone path $XW$ and other model activations are shared by the compared methods.

\subsection{Activation Memory}
\label{app:activation_memory}

Standard LoRA keeps $A$ and $B$ trainable. To compute
\begin{equation}
    \nabla_A\mathcal{L}=X^\top G B^\top,
\end{equation}
the full activation $X\in\mathbb{R}^{N\times m}$ must be available in the backward pass. Following the dominant-term accounting used in the main text, the LoRA-side activation count is
\begin{equation}
    \mathcal{M}_{\mathrm{LoRA}}^{\mathrm{act}}
    =
    Nm.
\end{equation}

LoRA-FA freezes $A$ and only updates $B$. Since it does not need $\nabla_A\mathcal{L}$, it only saves
\begin{equation}
    Z=XA\in\mathbb{R}^{N\times r},
\end{equation}
and therefore
\begin{equation}
    \mathcal{M}_{\mathrm{LoRA\text{-}FA}}^{\mathrm{act}}
    =
    Nr.
\end{equation}

CARE-LoRA saves $Z$ and the batch-dependent reconstruction matrix $M\in\mathbb{R}^{r\times m}$, giving
\begin{equation}
    \mathcal{M}_{\mathrm{CARE}}^{\mathrm{act}}
    =
    Nr+rm.
\end{equation}
It is smaller than the LoRA-side activation count whenever
\begin{equation}
    Nr+rm < Nm
    \quad\Longleftrightarrow\quad
    r < \frac{Nm}{N+m}.
\end{equation}
For example, when $N=8192$ and $m=768$, the right-hand side is approximately $702.2$, which is far larger than common LoRA ranks such as $8$ or $16$, so this condition is easily satisfied in typical fine-tuning settings.

\subsection{Layer-Wise Numerical Examples}
\label{app:memory_examples}

For $N=8192$, the activation counts for a single adapted layer are as follows.

\begin{table*}[!t]
\centering
\caption{LoRA-Side Activation Storage for One Adapted Linear Layer. Counts Are Reported in Number of Scalar Values.}
\label{tab:activation_storage_examples}
\setlength{\tabcolsep}{12pt}
\small
\renewcommand{\arraystretch}{1.05}
\begin{tabular}{ccccc}
\toprule
Input dim. $m$ & Rank $r$ & LoRA $Nm$ & CARE $r(N+m)$ & CARE / LoRA \\
\midrule
$768$  & $8$  & $6.29$M  & $71.7$K  & $1.14\%$ \\
$768$  & $16$ & $6.29$M  & $143.4$K  & $2.28\%$ \\
$3072$ & $8$  & $25.17$M & $90.1$K  & $0.36\%$ \\
$3072$ & $16$ & $25.17$M & $180.2$K & $0.72\%$ \\
\bottomrule
\end{tabular}
\end{table*}

If LoRA is applied to all attention and feed-forward linear layers of a T5-Base-style encoder-decoder model, the same calculation can be aggregated over layers. With $12$ encoder blocks and $12$ decoder blocks, an encoder block contains four attention projections and two feed-forward projections, while a decoder block contains self-attention, cross-attention, and two feed-forward projections. Using $d_{\mathrm{model}}=768$ and $d_{\mathrm{ff}}=3072$, the sum of input dimensions over these adapted linear layers is
\begin{equation*}
\begin{aligned}
    \sum_{\ell} m_\ell
    &=
    12(5\cdot 768 + 3072)
    +
    12(9\cdot 768 + 3072)
    \\
    &=
    202{,}752,
\end{aligned}
\end{equation*}
and the number of adapted linear layers is
\begin{equation*}
    L_{\mathrm{adapt}} = 12\cdot 6 + 12\cdot 10 = 192.
\end{equation*}
For $N=8192$, standard LoRA stores
\begin{equation*}
    N\sum_{\ell}m_\ell
    =
    1660.9\mathrm{M}
\end{equation*}
LoRA-side activation values, independent of $r$ under this dominant-term count. For $r=8$, CARE-LoRA stores
\begin{equation*}
    r\left(NL_{\mathrm{adapt}}+\sum_{\ell}m_\ell\right)
    =
    14.20\mathrm{M}
\end{equation*}
values, about $0.86\%$ of the standard LoRA count. Assuming two bytes per value in BF16 or FP16, this corresponds to roughly $3.09$GiB versus $27.1$MiB for these adapter-side activation buffers. For $r=16$, the corresponding CARE-LoRA count is $28.41\mathrm{M}$ values, or about $1.71\%$ of standard LoRA.

\section{Detailed Computation Analysis}
\label{app:compute_details}

This appendix gives the detailed derivation for the computation analysis of the main paper, using the same adapted linear layer as in Appendix~\ref{app:memory_details}, with input activation $X\in\mathbb{R}^{N\times m}$, projection-down matrix $A\in\mathbb{R}^{m\times r}$, and projection-up matrix $B\in\mathbb{R}^{r\times n}$. As in Appendix~\ref{app:memory_details}, the count below only includes matrix multiplications specific to the LoRA branch, excluding the shared frozen backbone path.

\subsection{Computation Cost}
\label{app:compute_cost}

We count matrix multiplication costs up to constant factors, i.e., using multiply-accumulate order rather than distinguishing a factor of two in FLOPs. The LoRA branch consists of the following operations.

\textbf{Standard LoRA.}
In the forward pass, standard LoRA computes
\begin{equation*}
    Z=XA,
    \qquad
    \Delta Y=ZB,
\end{equation*}
with costs $Nmr$ and $Nrn$, respectively.

In the backward pass, let $G=\partial\mathcal{L}/\partial Y$. Standard LoRA computes
\begin{equation}
    U=GB^\top\in\mathbb{R}^{N\times r},
\end{equation}
with cost $Nrn$, then
\begin{equation}
    \nabla_A\mathcal{L}=X^\top U
\end{equation}
with cost $Nmr$, and
\begin{equation}
    \nabla_B\mathcal{L}=Z^\top G
\end{equation}
with cost $Nrn$. The LoRA-side input-gradient contribution
\begin{equation}
    UA^\top
\end{equation}
has cost $Nmr$. Therefore,
\begin{equation}
    T_{\mathrm{LoRA}}
    =
    \underbrace{Nmr}_{XA}
    +
    \underbrace{Nrn}_{ZB}
    +
    \underbrace{Nrn}_{GB^\top}
    +
    \underbrace{Nmr}_{X^\top U}
    +
    \underbrace{Nrn}_{Z^\top G}
    +
    \underbrace{Nmr}_{UA^\top},
\end{equation}
which gives
\begin{equation}
    T_{\mathrm{LoRA}}
    =
    3Nmr+3Nrn.
\end{equation}

\textbf{CARE-LoRA.}
CARE-LoRA shares the same $XA$, $ZB$, $GB^\top$, $Z^\top G$, and $UA^\top$ computations. The difference is how the $A$-gradient is obtained.

In the forward pass, CARE-LoRA forms
\begin{equation*}
    H=Z^\top Z+\lambda I_r,
    \qquad
    C=Z^\top X,
\end{equation*}
and solves
\begin{equation}
    M = \mathrm{solve}(H,C).
\end{equation}
The cost of forming $H$ is $Nr^2$. The cost of forming $C$ is $Nmr$. This $Nmr$ term replaces the $Nmr$ term used by standard LoRA to compute $X^\top U$ in the backward pass. Therefore, it is not an additional large-matrix term relative to standard LoRA. Since $H$ is an $r\times r$ symmetric positive definite matrix for $\lambda>0$, it can be factorized by Cholesky decomposition and solved against $m$ right-hand sides. This contributes
\begin{equation}
    O(r^3+mr^2).
\end{equation}

In the backward pass, CARE-LoRA computes
\begin{equation}
    T=Z^\top U
\end{equation}
with cost $Nr^2$, and then
\begin{equation}
    \widetilde{\nabla}_A\mathcal{L}=M^\top T
\end{equation}
with cost $mr^2$. Therefore,
\begin{equation}
\begin{aligned}
    T_{\mathrm{CARE}}
    &=
    3Nmr+3Nrn
    +
    2Nr^2+2mr^2+r^3 .
\end{aligned}
\end{equation}
The ratio between CARE-LoRA and standard LoRA is
\begin{equation}
\begin{aligned}
    \frac{T_{\mathrm{CARE}}}{T_{\mathrm{LoRA}}}
    &=
    \frac{
    3Nmr+3Nrn+2Nr^2+2mr^2+r^3
    }{
    3Nmr+3Nrn
    }
    \\
    &=
    1+
    \frac{2Nr^2+2mr^2+r^3}{3Nr(m+n)}
    \\
    &=
    1+
    \frac{r(2N+2m+r)}{3N(m+n)}.
\end{aligned}
\end{equation}
Thus the extra arithmetic scales with $r$ relative to the large feature dimensions. For common settings with $r\ll\min(m,n,N)$, the additional algebraic cost is small.

\subsection{Numerical Cost Ratios}
\label{app:compute_examples}

For a hidden-to-hidden layer with $N=8192$, $m=n=768$, and $r=8$,
\begin{equation*}
    T_{\mathrm{LoRA}}
    =
    3\cdot8192\cdot768\cdot8
    +
    3\cdot8192\cdot8\cdot768
    =
    301.99\mathrm{M},
\end{equation*}
and the reconstruction overhead is
\begin{equation*}
    2\cdot8192\cdot8^2
    +
    2\cdot768\cdot8^2
    +
    8^3
    =
    1.147\mathrm{M}.
\end{equation*}
Therefore,
\begin{equation*}
    \frac{T_{\mathrm{CARE}}}{T_{\mathrm{LoRA}}}
    \approx
    1.0038.
\end{equation*}
For $r=16$, the ratio becomes approximately $1.0076$.

For a feed-forward down-projection input with $m=3072$, $n=768$, and $r=8$, the same formula gives
\begin{equation*}
    \frac{T_{\mathrm{CARE}}}{T_{\mathrm{LoRA}}}
    \approx
    1.0019.
\end{equation*}
For $r=16$, the ratio is approximately $1.0038$. These ratios should be understood as algebraic operation counts. Actual wall-clock time also depends on kernel fusion, memory access patterns, and the efficiency of the small Cholesky solve.

\section{Directional Fidelity of the Reconstructed Gradient}
\label{app:grad_fidelity}

\textbf{Global Cosine Similarity.} CARE-LoRA updates the projection-down matrix with the reconstructed gradient $\widetilde{\nabla}_A\mathcal{L}=\widehat{X}^\top G B^\top$ in place of the exact gradient $\nabla_A\mathcal{L}=X^\top G B^\top$. We quantify the fidelity of this approximation via the global cosine similarity between the exact and reconstructed $A$-gradients across all $L$ adapted layers:
\begin{equation}
C_{\mathrm{global}}
= \frac{\langle g_{\mathrm{all}},\widetilde g_{\mathrm{all}}\rangle}{\|g_{\mathrm{all}}\|_2\,\|\widetilde g_{\mathrm{all}}\|_2},
\label{eq:global_cosine_appendix}
\end{equation}
where $g_{\mathrm{all}}=\big[\operatorname{vec}(\nabla_{A_1}\mathcal{L})^\top,\dots,\operatorname{vec}(\nabla_{A_L}\mathcal{L})^\top\big]^\top$ and $\widetilde g_{\mathrm{all}}$ is defined analogously. By construction $C_{\mathrm{global}}\in[-1,1]$, with $C_{\mathrm{global}}=1$ indicating perfect directional alignment between the two gradients. In practice, $g_{\mathrm{all}}$ need not be explicitly constructed because the inner product and squared norms can be computed and accumulated layer by layer. We focus on direction rather than magnitude because differences in magnitude can be absorbed by the learning rate and further attenuated by AdamW's per-coordinate rescaling.

\textbf{Setting.} We measure $C_{\mathrm{global}}$ along a standard CARE-LoRA fine-tuning run of Mistral-7B-v0.3 on the Math task with $r=16$ and seed 0. The trajectory is driven entirely by the reconstructed gradient, and at diagnostic checkpoints we temporarily retain $X$ to compute the exact $\nabla_A\mathcal{L}$ for measurement only. These checkpoints are placed evenly at $5\%, 10\%, \dots, 100\%$ of the optimizer steps. The $0\%$ point is excluded because the default initialization $B=0$ forces both gradients to vanish and leaves the cosine undefined.

\textbf{Results.} Figure~\ref{fig:grad_cosine_appendix} plots $C_{\mathrm{global}}$ along the training trajectory. $C_{\mathrm{global}}$ stays consistently above $0.6$ from the first valid checkpoint to the last, with no visible degradation across training. The reconstructed $A$-gradient therefore retains strong directional agreement with the exact $\nabla_A\mathcal{L}$ throughout fine-tuning, justifying its use as the surrogate update direction in CARE-LoRA.

\begin{figure}[!t]
\centering
\includegraphics[width=\columnwidth]{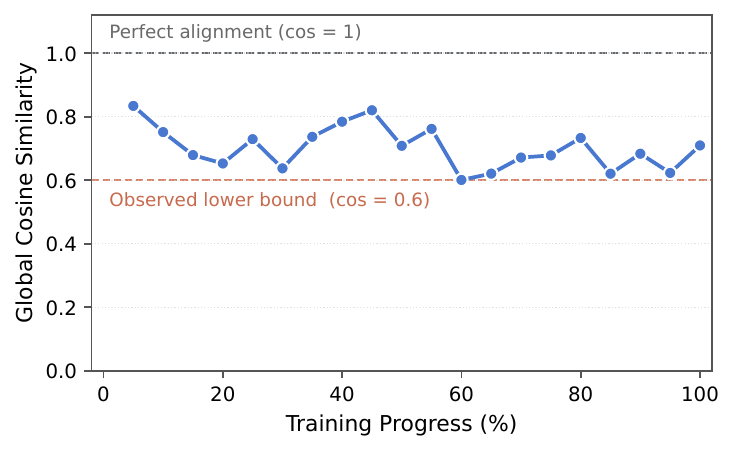}
\caption{Global cosine similarity between the exact and reconstructed $A$-gradients of CARE-LoRA along the training trajectory on the Math task. The dashed line at $1.0$ marks perfect alignment.}
\label{fig:grad_cosine_appendix}
\end{figure}

\textbf{Geometric interpretation.} The observed cosine must be read against the geometry of the ambient space. The concatenated $A$-gradient vector in Eq.~\eqref{eq:global_cosine_appendix} lives in $\mathbb{R}^{d}$ with $d=r\sum_{\ell}m_\ell$, where $m_\ell$ is the input width of the $\ell$-th adapted layer. For Mistral-7B-v0.3 with $r=16$ and $L=224$ adapted linear layers, $\sum_\ell m_\ell=1{,}245{,}184$, giving $d\approx 1.99\times 10^{7}$. In a space of this dimension, two directions chosen without relation to each other are almost exactly orthogonal, as the following lemma makes precise.

\begin{lemma}[Near-orthogonality in high dimension~\cite{vershynin2018hdp}]
\label{lem:near_orthogonality_appendix}
Let $u,v$ be independent and uniformly distributed on the unit sphere $\mathbb{S}^{d-1}\subset\mathbb{R}^d$. Then $\mathbb{E}\langle u,v\rangle=0$, $\operatorname{Var}\langle u,v\rangle=1/d$, and for every $\varepsilon\in(0,1)$,
\begin{equation}
\Pr\!\big[\,|\langle u,v\rangle|\ge\varepsilon\,\big]\;\le\;2\exp\!\Big(\!-\tfrac{d\varepsilon^2}{2}\Big).
\label{eq:cap_bound_appendix}
\end{equation}
\end{lemma}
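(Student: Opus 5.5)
The plan is to reduce everything to the one-dimensional marginal of a uniform vector on the sphere. By rotation invariance, conditioning on $v$ and rotating so that $v=e_1$ shows that $\langle u,v\rangle$ has the same law as $u_1$, the first coordinate of a uniform point on $\mathbb{S}^{d-1}$, and that this law does not depend on $v$. Therefore it suffices to prove the three claims for $u_1$.

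The mean and variance follow from symmetry. The map $u\mapsto -u$ preserves the uniform measure, so $u_1\stackrel{d}{=}-u_1$ and $\mathbb{E}u_1=0$. Coordinate permutations also preserve the measure, so all the $\mathbb{E}u_i^2$ are equal. Since $\sum_i u_i^2=1$, this gives $\mathbb{E}u_1^2=1/d$ and hence $\operatorname{Var}\langle u,v\rangle=1/d$.

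For the tail bound I would use the spherical-cap estimate. My first choice is the explicit density of $u_1$, which is proportional to $(1-t^2)^{(d-3)/2}$ on $[-1,1]$, together with the elementary bound $1-t^2\le e^{-t^2}$.

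\textbf{Main obstacle.} This route gives $\Pr[u_1\ge\varepsilon]$ bounded by a ratio of integrals with exponent roughly $e^{-(d-3)\varepsilon^2/2}$. Getting exactly $\exp(-d\varepsilon^2/2)$ needs care with the normalizing constant. If the density route is messy, I would switch to the classical geometric cap argument. The set $\{u:u_1\ge\varepsilon\}$ lies in a ball of radius $\sqrt{1-\varepsilon^2}$ centered at $\varepsilon e_1$, and comparing surface measures (cone volumes) gives $\Pr[u_1\ge\varepsilon]\le(1-\varepsilon^2)^{d/2}\le e^{-d\varepsilon^2/2}$. A union bound over the two signs then yields $2\exp(-d\varepsilon^2/2)$.

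The cone comparison is the delicate step. It requires $\varepsilon$ in the regime where the cap lies within that smaller ball, which holds for $\varepsilon\ge1/\sqrt2$. For smaller $\varepsilon$ I would bound the cap volume directly via the projection $u\mapsto u-\varepsilon e_1$. Alternatively, since the lemma is cited from \cite{vershynin2018hdp}, I could defer to that reference for the cap inequality.
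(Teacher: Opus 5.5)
\textbf{Gap: the cone comparison is applied in the wrong range of $\varepsilon$.} The paper does not prove this lemma; it only cites \cite{vershynin2018hdp}, so your fallback of deferring to the reference is exactly what the paper does. Your reduction of $\langle u,v\rangle$ to $u_1$ by rotation invariance is correct, and so is the symmetry argument giving $\mathbb{E}\langle u,v\rangle=0$ and $\operatorname{Var}\langle u,v\rangle=\mathbb{E}u_1^2=1/d$. Your self-contained tail argument, however, has its two regimes reversed.

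Write $C(\varepsilon)=\{x\in\mathbb{S}^{d-1}:x_1\ge\varepsilon\}$. The comparison $\Pr[u_1\ge\varepsilon]\le\rho^d$ needs the whole cone $\{tx:x\in C(\varepsilon),\,t\in[0,1]\}$ to lie in a ball of radius $\rho$, not just the cap. The cap lies in the ball of radius $\sqrt{1-\varepsilon^2}$ about $\varepsilon e_1$ for every $\varepsilon$, since $|x-\varepsilon e_1|^2=1-2\varepsilon x_1+\varepsilon^2\le1-\varepsilon^2$. But the apex $0$ lies in that ball only when $\varepsilon\le\sqrt{1-\varepsilon^2}$, i.e.\ $\varepsilon\le1/\sqrt2$. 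So this ball settles exactly the small-$\varepsilon$ case you planned to treat separately, and it fails for $\varepsilon>1/\sqrt2$, the case you assigned to it.

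In that range your intermediate inequality $\Pr[u_1\ge\varepsilon]\le(1-\varepsilon^2)^{d/2}$ is actually false. For fixed $d\ge2$ the cap measure decays only like $(1-\varepsilon^2)^{(d-1)/2}$ as $\varepsilon\to1$. Already for $d=2$, $\varepsilon=0.99$, one gets $\arccos(0.99)/\pi\approx0.045>0.0199=1-\varepsilon^2$. As written, the proposal leaves the tail bound unproved for $\varepsilon>1/\sqrt2$, and the ``projection $u\mapsto u-\varepsilon e_1$'' step is left unspecified.

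\textbf{The fix: a second ball for large $\varepsilon$.} For $\varepsilon\ge1/\sqrt2$, use the ball of radius $1/(2\varepsilon)$ centered at $e_1/(2\varepsilon)$. It contains $0$ (on its boundary) and the cap, because $|x-e_1/(2\varepsilon)|^2=1-x_1/\varepsilon+1/(4\varepsilon^2)\le1/(4\varepsilon^2)$. By convexity it therefore contains the whole cone, so $\Pr[u_1\ge\varepsilon]\le(2\varepsilon)^{-d}$.

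The map $\varepsilon\mapsto\varepsilon^2/2-\log(2\varepsilon)$ is decreasing on $(0,1)$ and equals $1/4-\tfrac12\log2<0$ at $1/\sqrt2$. Hence $(2\varepsilon)^{-1}\le e^{-\varepsilon^2/2}$ on $[1/\sqrt2,1)$. Combine this with $(1-\varepsilon^2)^{d/2}\le e^{-d\varepsilon^2/2}$ for $\varepsilon\le1/\sqrt2$ and your union bound over the two signs. This yields $2\exp(-d\varepsilon^2/2)$ for all $\varepsilon\in(0,1)$, a self-contained proof of what the paper only cites.

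Drop the density route. Bounding $(1-t^2)^{(d-3)/2}$ by $e^{-(d-3)t^2/2}$ loses the constant, and recovering exactly $d/2$ that way takes considerably more work than the two-ball argument.
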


Instantiated here, Lemma~\ref{lem:near_orthogonality_appendix} implies that, under the random-direction null model, an
uninformative reconstruction would produce a cosine of typical magnitude $1/\sqrt{d}\approx2.2\times10^{-4}$, and that the
probability of reaching even $0.6$ by chance is at most $2\exp(-3.6\times10^{6})$, effectively zero. This provides
supporting evidence that the reconstructed gradient retains meaningful directional alignment with the exact $A$-gradient.
We further note that $0.6$ is an empirical lower bound across our diagnostic checkpoints rather than a typical value, so the
observed directional alignment over the bulk of training is generally stronger than this conservative reading suggests.

\begin{table}[!t]
\centering
\small
\setlength{\tabcolsep}{4pt}
\renewcommand{\arraystretch}{1.05}
\caption{Hyperparameter Configuration for Fine-Tuning T5-Base on the GLUE Datasets.}
\label{tab:app_glue_hparams}
\begin{tabular}{lc}
\toprule
\textbf{Hyperparameter} & \textbf{Value} \\
\midrule
Learning Rate & $5\times10^{-4}$ \\
Batch Size & $32$ \\
Epochs (MNLI, QNLI) & $1$ \\
Epochs (others) & $10$ \\
Max Sequence Length & $256$ \\
LoRA Rank $r$ & $8$ \\
LoRA Alpha $\alpha$ & $16$ \\
Target Modules & All linear layers \\
LR Scheduler & Cosine \\
Warmup Ratio & $0.03$ \\
Seed & $0$, $21$, $42$ \\
\midrule
Evaluation Metric (CoLA) & Matthews Corr. \\
Evaluation Metric (MRPC) & F1 \\
Evaluation Metric (others) & Accuracy \\
\bottomrule
\end{tabular}
\end{table}

\begin{table}[!t]
\centering
\small
\setlength{\tabcolsep}{4pt}
\renewcommand{\arraystretch}{1.05}
\caption{Hyperparameter Configuration for Fine-Tuning T5-Base on the SuperGLUE Datasets.}
\label{tab:app_superglue_hparams}
\begin{tabular}{lc}
\toprule
\textbf{Hyperparameter} & \textbf{Value} \\
\midrule
Learning Rate & $5\times10^{-4}$ \\
Batch Size & $32$ \\
Epochs (BoolQ) & $20$ \\
Epochs (CB, COPA) & $50$ \\
Epochs (RTE, WiC) & $10$ \\
Max Sequence Length & $256$ \\
LoRA Rank $r$ & $8$ \\
LoRA Alpha $\alpha$ & $16$ \\
Target Modules & All linear layers \\
LR Scheduler & Cosine \\
Warmup Ratio & $0.03$ \\
Seed & $0$, $21$, $42$ \\
\midrule
Evaluation Metric (CB) & F1 \\
Evaluation Metric (others) & Accuracy \\
\bottomrule
\end{tabular}
\end{table}

\section{Detailed Experimental Configurations}
\label{app:exp_config}

This appendix reports the key hyperparameters, evaluation protocols, and hardware used for the three experimental settings in Section~IV, covering T5-Base fine-tuning on GLUE and SuperGLUE, Mistral-7B-v0.3 fine-tuning on mathematical reasoning, code generation, and instruction following, and SD3-Medium fine-tuning on DreamBooth personalization.

Unless otherwise noted, all methods across the three settings use the standard LoRA initialization, with $A$ drawn from a Kaiming uniform distribution and $B$ set to zero, the default and most widely used initialization for LoRA-based fine-tuning. The exception is PiSSA, which instead uses the SVD-based initialization already described in Section~IV-C.

For CARE-LoRA, we use a fixed diagonal offset $\lambda=10^{-6}$ in all experiments when solving $(Z^\top Z+\lambda I_r)M=Z^\top X$. This value is used for numerical stability and is not tuned across tasks.

\subsection{Experiments on Natural Language Understanding}
\label{app:config_nlu}

Table~\ref{tab:app_glue_hparams} and Table~\ref{tab:app_superglue_hparams} list the hyperparameters used for fine-tuning T5-Base on GLUE and SuperGLUE, corresponding to the same-rank results in Section~IV-B. All compared methods use rank $r=8$ with $\alpha=16$.

The number of training epochs varies by task, as shown in Table~\ref{tab:app_glue_hparams} and Table~\ref{tab:app_superglue_hparams}. We evaluate on the validation set after every epoch and report the best-epoch result for each task.

All experiments in this subsection are conducted on a single NVIDIA GeForce RTX 2080 Ti 11GB.

\subsection{Experiments on Large Language Models}
\label{app:config_llm}

Table~\ref{tab:app_llm_hparams} lists the hyperparameters used for fine-tuning Mistral-7B-v0.3 on mathematical reasoning, code generation, and instruction following, corresponding to the results in Section~IV-C. The main comparison uses rank $r=8$ with $\alpha=16$ for all methods, and CARE-LoRA is additionally evaluated at $r=16$ with $\alpha=32$ to study whether the saved activation memory can be reinvested into stronger performance, keeping $\alpha/r=2$ fixed.

\begin{table}[!t]
\centering
\small
\setlength{\tabcolsep}{2pt}
\renewcommand{\arraystretch}{1.05}
\caption{Hyperparameter Configuration for Fine-Tuning Mistral-7B-v0.3 on Mathematical Reasoning, Code Generation, and Instruction Following.}
\label{tab:app_llm_hparams}
\begin{tabular}{lc}
\toprule
\textbf{Hyperparameter} & \textbf{Value} \\
\midrule
Learning Rate (Math, Instruct) & $5\times10^{-5}$ \\
Learning Rate (Code) & $1\times10^{-4}$ \\
Batch Size & $32$ \\
Micro-Batch Size (Math) & $4$ \\
Micro-Batch Size (Code) & $2$ \\
Micro-Batch Size (Instruct) & $1$ \\
Epochs & $1$ \\
Max Sequence Length (Math) & $512$ \\
Max Sequence Length (Code) & $1024$ \\
Max Sequence Length (Instruct) & $2048$ \\
LoRA Rank $r$ & 8 (main), 16 (CARE-LoRA) \\
LoRA Alpha $\alpha$ & 16 (main), 32 (CARE-LoRA) \\
Target Modules & All linear layers \\
LR Scheduler & Cosine \\
Warmup Ratio & $0.03$ \\
Seed & $0$, $21$, $42$ \\
\midrule
Evaluation Metric (Math) & Accuracy \\
Evaluation Metric (Code) & Pass@1 \\
Evaluation Metric (Instruct) & Prompt/Instruction-Strict \\
\bottomrule
\end{tabular}
\end{table}

\begin{table}[!t]
\centering
\small
\setlength{\tabcolsep}{4pt}
\renewcommand{\arraystretch}{1.05}
\caption{Training and Generation Configuration for Fine-Tuning SD3-Medium on DreamBooth Personalization.}
\label{tab:app_diffusion_hparams}
\begin{tabular}{lc}
\toprule
\textbf{Hyperparameter} & \textbf{Value} \\
\midrule
Resolution & $512\times512$ \\
Batch Size & $4$ \\
Training Steps & $500$ per subject \\
Learning Rate & $2\times10^{-4}$ \\
LR Scheduler & Constant \\
Warmup Steps & $0$ \\
LoRA Rank $r$ & $8$ (LoRA), $16$ (CARE-LoRA) \\
LoRA Alpha $\alpha$ & $16$ (LoRA), $32$ (CARE-LoRA) \\
Target Modules & SD3 attention projections \\
Seed & $0$, $21$, $42$ \\
\midrule
Inference Steps & $28$ \\
Guidance Scale & $7.0$ \\
Images per Prompt & $4$ \\
Generation Seed & $42$ \\
Evaluation Metrics & DINO, CLIP-I, CLIP-T \\
\bottomrule
\end{tabular}
\end{table}

All experiments in this subsection are conducted on a single NVIDIA GeForce RTX 5090 32GB.

\subsection{Experiments on Diffusion Models}
\label{app:config_diffusion}

Table~\ref{tab:app_diffusion_hparams} lists the training and generation configuration used for fine-tuning SD3-Medium on DreamBooth personalization, corresponding to the results in Section~IV-D. Each of the ten subjects is trained independently under this configuration for both standard LoRA and CARE-LoRA.

All experiments in this subsection are conducted on a single NVIDIA GeForce RTX 4090 24GB.

\section{Broader Impacts and Limitations}
\label{app:impacts_limitations}

\textbf{Broader Impacts.} CARE-LoRA reduces the activation memory required for LoRA fine-tuning while keeping both LoRA matrices trainable, lowering the hardware cost of adapting large pretrained models to new tasks. This can make fine-tuning large language and diffusion models more accessible to research groups and practitioners with limited computational resources. As with other efficiency improvements for model fine-tuning, easier access to adaptation could also lower the barrier for misuse, and we encourage use of CARE-LoRA consistent with the license and usage terms of the underlying pretrained models.

\textbf{Limitations.} While CARE-LoRA is effective across the settings studied in this paper, several directions remain open. (i) Although the experiments in this paper cover T5-Base, Mistral-7B-v0.3, and SD3-Medium, evaluating larger backbones and additional modalities remains to be explored. (ii) The subspace-movement guarantee proved in this paper holds generically. This is a common form of guarantee in this line of analysis, though it does not cover atypical worst-case configurations. (iii) The reconstruction is developed for linear layers adapted by LoRA, and extending a similar idea to other adapter families or to activations retained by nonlinear operators is a direction we leave for future work.

\end{document}